\documentclass[journal]{IEEEtai}

\usepackage[colorlinks,
urlcolor=red,
linkcolor=black,
citecolor=black]{hyperref}

\usepackage{color,array}

\usepackage{graphicx}

\usepackage{amsmath}
\usepackage{amssymb}
\usepackage{mathtools}
\usepackage{amsthm}
\usepackage{bm}
\usepackage{cite}
\usepackage{subfigure}
\usepackage[ruled,vlined]{algorithm2e} 

\newcommand{\upmodels}{{\perp \!\!\! \perp}}
\newcommand{\nupmodels}{{\not \! \perp \!\!\! \perp}}

\usepackage{soul}
\usepackage{color, xcolor}
\definecolor{lightblue}{RGB}{0, 191, 255}
\sethlcolor{lightblue}
\soulregister\cite7
\soulregister\eqref7
\soulregister\ref7


\newtheorem{lemma}{Lemma}
\newtheorem{proposition}{Proposition}
\newtheorem{definition}{Definition}
\newtheorem{corollary}{Corollary}

\setcounter{page}{1}

\begin{document}

\title{Identifying Unique Spatial-Temporal Bayesian Network without Markov Equivalence}

\author{Mingyu Kang,
	Duxin Chen,
	Ning Meng,
	Gang Yan,
	and Wenwu Yu,~\IEEEmembership{Senior~Member,~IEEE} 
	
	\thanks{This work is supported by the National Key R\&D Program of China under Grant No. 2022ZD0120003, the Zhishan Youth Scholar Program, the National Natural Science Foundation of China under Grant Nos. 62233004, 62273090, 62073076, and the Jiangsu Provincial Scientific Research Center of Applied Mathematics under Grant No. BK20233002. (corresponding authors: Duxin Chen, Wenwu Yu)}
	\thanks{Mingyu Kang is with the School of Cyber Science and Engineering, Southeast University, Nanjing 210096, China. (e-mail: kangmingyu@seu.edu.cn)}
	\thanks{Duxin Chen is with the School of Mathematics, Southeast University, Nanjing 210096, China. (e-mail: chendx@seu.edu.cn)}
	\thanks{Ning Meng is with the School of Cyber Science and Engineering, Southeast University, Nanjing 210096, China, and also with the R\&D department of Volkswagen China Technology Company (VCTC), Hefei 230601, China. (e-mail: ning.meng@volkswagen-anhui.com)}
	\thanks{Gang Yan is with the School of Physics Science and Engineering, Tongji University, Shanghai 200092, China. (e-mail: gyan@tongji.edu.cn).}
	\thanks{Wenwu Yu is with the Frontiers Science Center for Mobile Information Communication and Security, School of Mathematics, Southeast University, Nanjing 210096, China, and also with the Purple Mountain Laboratories, Nanjing 211102, China. (e-mail: wwyu@seu.edu.cn).}
}

%

\markboth{IEEE Transactions on Artificial Intelligence, DOI: 10.1109/TAI.2024.3483188}
{Author, \MakeLowercase{\textit{et al.}}: Bare Demo of IEEEtai.cls for IEEE Journals of IEEE Transactions on Artificial Intelligence}

\maketitle

\begin{abstract}
	
	Identifying vanilla Bayesian network to model spatial-temporal causality can be a critical yet challenging task. Different Markovian-equivalent directed acyclic graphs would be identified if the identifiability is not satisfied. To address this issue, Directed Cyclic Graph~\cite{DCG2013} is proposed to drop the directed acyclic constraint. But it does not always hold, and cannot model dynamical time-series process. Then, Full Time Graph~\cite{PGM2017} is proposed with introducing high-order time delay. Full Time Graph has no Markov equivalence class by assuming no instantaneous effects. But, it also assumes that the causality is invariant with varying time, that is not always satisfied in the spatio-temporal scenarios. Thus, in this work, a Spatial-Temporal Bayesian Network (STBN) is proposed to theoretically model the spatial-temporal causality from the perspective of information transfer. STBN explains the disappearance of network structure $X\rightarrow Z \rightarrow Y$ and $X\leftarrow Z \leftarrow Y$ by the principle of information path blocking. And finally, the uniqueness of STBN is proved. Based on this, a High-order Causal Entropy (HCE) algorithm is also proposed to uniquely identify STBN under time complexity $\mathcal{O}(n^3\tau_{max})$, where $n$ is the number of variables and $\tau_{max}$ is the maximum time delay. Numerical experiments are conducted with comparison to other baseline algorithms. The results show that HCE algorithm obtains state-of-the-art identification accuracy. The code is available at \url{https://github.com/KMY-SEU/HCE}.   
	
\end{abstract}

\begin{IEEEImpStatement}
This work presents work whose goal is to advance the field of Artificial Intelligence. This work provides a new way to think in causality though the Spatial-Temporal Bayesian Network, and proposes a High-order Causal Entropy algorithm to identify the network structure uniquely with polynomial time complexity. The model and algorithm are efficient and explainable to be applied to discover the causal interactions with high-order time delays in collective behaviors (e.g., animal flocks and crowds), industrial system (e.g., smart grid, intelligent transportation system and wireless internet), etc. 
\end{IEEEImpStatement}

\begin{IEEEkeywords}
Spatial-temporal Bayesian network, Markov equivalence, causality, high-order causal entropy, time series.
\end{IEEEkeywords}

\section{Introduction}

Identifying spatial-temporal causality can be a critical yet challenging task in many data-intensive scenarios, including collective behaviors~\cite{collective2016}, water systems~\cite{hydrology2021}, brain networks~\cite{DDAGs2024NeuroImage}, transportation systems~\cite{LV2024TITS} and electricity networks~\cite{cGAN2024Kang}, etc. A common point is that they can be modeled as a multivariate nonlinear system on the spatio-temporal interactions between variables, and the interactions usually have time delays.

Many algorithms~\cite{GC1969, CCM2012, OCE2015, FullCI2018, PCMCI2019, dynotears2020PRML, TimeByesian2021, TimeCausality2022ICLR, gong2023rhino, lippe2023causal} have been proposed to identify the spatio-temporal causality, that is modeled as a (causal) Bayesian network with a constraint of directed acyclic graph (DAG). However, the DAG can be hard or impossible to be identified if the ``structure identifiability''~\cite{PGM2017} is not satisfied. On the one hand, the independence test cannot distinguish the fork structure $X\leftarrow Z \rightarrow Y$ from the chain structures $X\rightarrow Z \rightarrow Y$ and $X\leftarrow Z \leftarrow Y$, if any three variables $X, Y, Z$ are given~\cite{equivalence2022}. As shown in Fig.~\ref{meq}(a), there are only four fundamental structures in a DAG. But $X\leftarrow Z \rightarrow Y$, $X\rightarrow Z \rightarrow Y$ and $X\leftarrow Z \leftarrow Y$ cannot be distinguished, because they conform to the same conditional independence $X\upmodels Y | Z$. Then, the forward and reverse directions are both detected, but only one is true. Thus, a class of Markov-equivalent DAGs would be detected finally. On the other hand, the DAG representation does not match the functional mechanism in data generation, especially in time-series process. E.g., as shown in Fig.~\ref{meq}(b), the two variables are interactive from the view of functional mechanism, but the DAG only allows one of the causal links. In this case, the causality would also be non-identifiable, and different DAGs would be identified as Markov equivalence class. 

\begin{figure}[htbp]
	\centering  
	\subfigure[]{
		\begin{minipage}[t]{0.85\linewidth}
			\centering
			\includegraphics[width=\linewidth]{./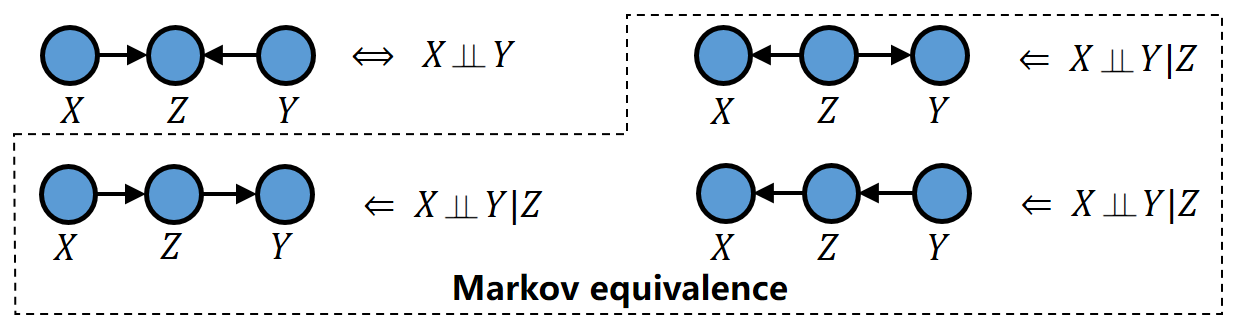}
		\end{minipage}
	}
	\subfigure[]{
		\begin{minipage}[t]{0.75\linewidth}
			\centering
			\includegraphics[width=\linewidth]{./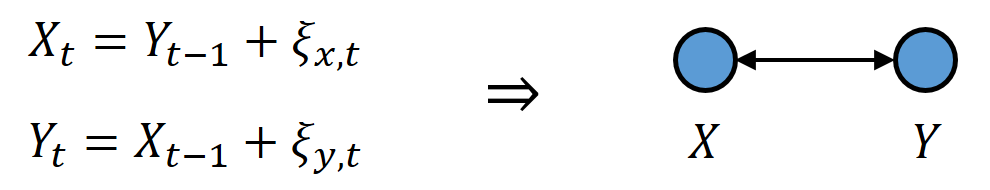}
		\end{minipage}
	}
	\caption{Markov equivalence. Here, $X_t$ and $Y_t$ represent the variables $X$ and $Y$ at time $t$. $\xi$ represents noise variable.}
	\label{meq}
\end{figure}

To address this issue, some special networks are designed, such as Directed Cyclic Graph~\cite{DCG2013}. Like the DAG-based Bayesian network, Directed Cyclic Graph can also be used to build a Structural Causal Model without further assumptions~\cite{equivalence2021}. However, Directed Cyclic Graph does not always hold because the common cause principle~\cite{causality2009} is not always satisfied. Moreover, that network does not consider time delay, so that cannot model dynamical time-series process. Thus, Full Time Graph~\cite{PGM2017}, a temporal Bayesian network, is proposed with introducing high-order time delay, and it has no Markov equivalence class if without instantaneous effects (see Theorem~10.1 in~\cite{PGM2017}). However, Full Time Graph assumes that the causality is invariant with varying time, that is not always satisfied in the spatio-temporal scenarios. 


Thus, in this work, a Spatial-Temporal Bayesian Network (STBN) is proposed to theoretically model the spatial-temporal causality from the perspective of information transfer. Transfer entropy~\cite{TE2000} is introduced here to quantify the information transfer. Based on this, the equivalence is proved between zero transfer entropy, time-series unpredictability and d-separation (equal to conditional independence). Then, information path blocking explains the disappearance of $X\rightarrow Z \rightarrow Y$ and $X\leftarrow Z \leftarrow Y$, and this further induces the uniqueness of STBN. Thus, the corresponding algorithm can be designed to identify the unique network structure of STBN by the transfer entropy with information path blocking. We call it as high-order causal entropy (HCE) algorithm here.

Thus, the main contributions are summarized as follows:
\begin{enumerate}
	\item[1.] STBN is proposed to address the issue of Markov equivalence.  
	
	\item[2.] HCE algorithm is proposed to identify the unique network structure of STBN.
	
	\item[3.] Sufficient experiments are conducted to support the conclusion of state-of-the-art accuracy of HCE algorithm on nonlinear data with time-varying distribution shift, compared to the other widely-used baseline algorithms. 

\end{enumerate}

\section{Background and Notation}


Throughout the paper, the following syntactic conventions are used. An upper case letter (e.g., $X_i, Y$) is denoted as a variable, and the same letter in lower case (e.g., $x_i, y$) is denoted as the state value of that variable. A bold-face capitalized letter (e.g., $\mathbf{X}_i, \mathbf{Y}$) is denoted as a set of variables, and a corresponding bold-face lower-case letter (e.g., $\mathbf{x}_i, \mathbf{y}$) is denoted as the assigned state values to each variable in that set. 

Then, a general nonlinear system is considered. It is represented as a coupling of multiple stochastic variables $\mathbf{V}=\{X_1, \dots, X_n\}$ as follow:
\begin{equation}
	X_{i,t} = f_i(X_{j,t-\tau}, j = 1, \dots, n, \tau=1, \dots, \tau_{max}) + \xi_{i, t}, 
	\label{sys}
\end{equation}
for each $X_i \in \mathbf{V}$ at time $t$. Here, $\tau_{max}$ is the maximum time delay, and $\xi_{i, t}$ is additive noise. Note that, all our works are built on this system.

\subsection{Bayesian Network and Markov Equivalence}


Bayesian network can be built on $\mathbf{V}=\{X_1, \dots, X_n\}$ in Eq.~(\ref{sys}). It is a DAG $\mathcal{G}=(\mathbf{V}, \mathbf{E})$ that represents probabilistic knowledge~\cite{Aprimer2016}, where $\mathbf{E}$ is the set of directed edges. Note that, if the Structural Causal Model is mentioned, the Bayesian network is causal, because every Structural Causal Model is associated with a graphical causal model, and the Bayesian network is one realization of the graphical causal models~\cite{Aprimer2016}. In Bayesian network, DAG is a type of graphical language to describe all of conditional independence in joint distribution. It factorizes the joint distribution into a product of multiple conditional probability distributions as follow:
\begin{equation}
	P(X_1, \dots, X_n) = \prod_{i=1}^n P(X_i|\mathbf{Pa}_i), 
\end{equation}
where $\mathbf{Pa}_i$ is the direct predecessors of $X_i$ in $\mathcal{G}$, that is also called as causal parent. Thus, with the Bayesian network, conditional independence is satisfied, that is, $X_i \upmodels \{X_1, X_2, \dots, X_{i-1}\} \backslash \mathbf{Pa}_i | \mathbf{Pa}_i$.

To identify the conditional independence from observation, d-separation provided a graphical criterion, that is defined as follow:
\begin{definition}[d-separation~\cite{PGM2009}]
	Let $\mathbf{X}$, $\mathbf{Y}$ and $\mathbf{Z}$ be three disjoint subsets of $\mathbf{V}$, and let $\mathbf{p}$ be any path from a node in $\mathbf{X}$ to a node in $\mathbf{Y}$ regardless of direction. $\mathbf{Z}$ is said to block $\mathbf{p}$ if and only if there is a node $v\in \mathbf{p}$ satisfying one of the following items.
	\begin{enumerate}
		\item $v$ has v-structure (two nodes $a, b\in \mathbf{p}$ pointing to $v$, namely $a\rightarrow v\leftarrow b$), and neither $v$ nor its any descendants are in $\mathbf{Z}$; 
		\item $v$ in $\mathbf{Z}$ and $v$ does not have v-structure.  
	\end{enumerate} 
	Then, $\mathbf{Z}$ d-separate $\mathbf{X}$ and $\mathbf{Y}$, denoted as $\mathbf{X} \upmodels_\mathcal{G} \mathbf{Y} | \mathbf{Z}$, if $\mathbf{Z}$ blocks any $\mathbf{p}$.  
\end{definition}

Based on d-separation, the assumptions of causal Markov and faithfulness can be defined as follows:
\begin{definition}[Causal Markov~\cite{PGM2009}]
	The joint distribution $P$ is Markovian to the DAG $\mathcal{G}$ if 
	\begin{equation}
		\mathbf{X} \upmodels_\mathcal{G} \mathbf{Y} | \mathbf{Z} \Rightarrow \mathbf{X} \upmodels \mathbf{Y} | \mathbf{Z},
	\end{equation}
	where $\mathbf{X}$, $\mathbf{Y}$ and $\mathbf{Z}$ are three disjoint subsets of $\mathbf{V}$. 
	\label{mp}
\end{definition}
\begin{definition}[Faithfulness~\cite{PGM2009}]
	Probability distribution $P$ is faithful to the DAG $\mathcal{G}$ if 
	\begin{equation}
		\mathbf{X} \upmodels \mathbf{Y} | \mathbf{Z} \Rightarrow \mathbf{X} \upmodels_\mathcal{G} \mathbf{Y} | \mathbf{Z}
	\end{equation}
	for all disjoint subsets $\mathbf{X}$, $\mathbf{Y}$ and $\mathbf{Z}$.
	\label{ff}
\end{definition}
The two assumptions are also used to identify DAG-based network structure from observation directly by independence test. However, it is often not easy because of Markov equivalence. As shown in Fig.~\ref{meq}(a), there are only four structures in a Bayesian network if given any three variables $X, Y, Z$. If $X\upmodels Y|Z$ is detected, the three equivalent structures are all corresponding to it. Thus, the identified Bayesian network are often a group of DAGs conforming to the Markov equivalence.

The issue of Markov equivalence can induce many problems that restrict the usage. Firstly, Markov blanket is not equal to the $\mathbf{Pa}_i$. The Markov blanket is defined as
\begin{definition}[Markov blanket~\cite{PGM2009}]
	For a DAG $\mathcal{G}$ and a target variable $X$, if there is a variable set $\mathbf{M} \subset \mathbf{V}$ satisfied 
	\begin{equation}
		X \upmodels_\mathcal{G} \mathbf{V} \backslash (\mathbf{M} \cup \{X\}) | \mathbf{M},
	\end{equation}
	then the minimal $\mathbf{M}$ is a Markov blanket, which contains all knowledge for predicting $X$. 
	\label{mb}
\end{definition}
If Markov equivalence class exists, the causal direction is hard to identified truly, because forward and reverse directions may be both detected, but only one is true in DAG. Thus, the children may be used to predict parents. That is, $\mathbf{Pa}_i \subseteq \mathbf{M}$ as defined, which is usually not what we want.

Secondly, it is hard to infer counterfactuals. Based on Pearl's causal ladder~\cite{Aprimer2016}, inferring counterfactual has three steps: identify a (causal) Bayesian network, intervene the network, and infer counterfactuals. However, if the network structure is not unique, the inferred counterfactual would not be unique. This is usually unacceptable.

\subsection{Predictability and Information Transfer}



Predictability was first proposed in Granger Causality (GC) test~\cite{GC1969} to detect and quantify causality in time-series linear system by vector autoregressive model. The predictability to variable $X$ provided by variable $Y$, $\mathcal{F}_{Y\rightarrow X}$, is equal to the logarithmic variance ratio between with and without $Y$ in Gaussian vector autoregression process~\cite{geweke1982, geweke1984}. That is,
\begin{equation}
	\begin{aligned}
		&\mathcal{F}_{Y\rightarrow X} = \log\left[\frac{Var(\xi_{t})}{Var(\xi_{t}')}\right], \\
		&\xi_{t} = X_{t} - \sum_{\tau=1}^{\tau_{max}} X_{t-\tau}w_{t-\tau}, \\
		&\xi_{t}' = X_{t} - \sum_{\tau=1}^{\tau_{max}} X_{t-\tau}w_{t-\tau} - \sum_{\tau=1}^{\tau_{max}} Y_{t-\tau}u_{t-\tau},
	\end{aligned}
\end{equation}
where $Var(\cdot)$ denotes the variance for samples as $t=\tau_{max}+1, \dots, k$ if $k$ samples is given. $w$ and $u$ are the numerical weights of $X$ and $Y$, respectively. Thus, $\mathcal{F}_{Y\rightarrow X}$ in GC is the reduction of uncertainty from $Y$ to $X$ actually.

The nonlinear GC can be sophisticated, because of the nonlinearity in multivariate system Eq.~(\ref{sys}). On the one hand, nonlinear autoregressive model is hard to build. On the other hand, it is time consuming to pair-wisely detect causality between each pair of variables. To address this issue, Barnett, et al.~\cite{TE2012} proposed an approach to quantify bi-variate nonlinear GC based on $\tau_{max}$-lag transfer entropy $\mathcal{T}_{Y\rightarrow X}$. It is defined as
\begin{equation}
	\begin{aligned}
		&\mathcal{T}_{Y\rightarrow X} \\
		=& H(X_t|X_{t-1}^{(\tau_{max})}) - H(X_t|X_{t-1}^{(\tau_{max})}, Y_{t-1}^{(\tau_{max})}) \\
		=& -\frac{1}{k-\tau_{max}} \sum_{t=\tau_{max}+1}^{k}\log \frac{f(x_{t-1}^{(\tau_{max})}; \bm{\theta}_0)}{f(x_{t-1}^{(\tau_{max})}, y_{t-1}^{(\tau_{max})}; \bm{\theta})} \\
		=& -\frac{1}{k-\tau_{max}} \log \frac{\mathcal{L}(\hat{\bm{\theta}}_0|X, Y)}{\mathcal{L}(\hat{\bm{\theta}}|X, Y)} \\
		=& \mathcal{F}_{Y\rightarrow X}.
	\end{aligned}
\end{equation} 
Here, $X_{t-1}^{(\tau_{max})}=\{X_{t-1}, \dots, X_{t-\tau_{max}}\}$, and $x_{t-1}^{(\tau_{max})}$ is the sampled state value. $Y_{t-1}^{(\tau_{max})}$ and $y_{t-1}^{(\tau_{max})}$ is the same as this. Moreover, $H(Y|X)=-\sum_{x, y}P(x, y)\log P(y|x)$, and $f(\cdot|\bm{\theta})$ is nonlinear function with parameter $\bm{\theta}$. $\mathcal{L}(\cdot |X, Y)$ is the probabilistic likelihood of $X$ and $Y$. $\bm{\theta}\in \mathbf{\Theta}$, and $\mathbf{\Theta}$ is full parameter set. $\bm{\theta}_0\in \mathbf{\Theta}_0 = \{\bm{\theta}\in \mathbf{\Theta}|f(x_{t-1}^{(\tau_{max})}, y_{t-1}^{(\tau_{max})}; \bm{\theta}) \text{ does not depend on } y_{t-1}^{(\tau_{max})}\}$. $\hat{\bm{\theta}}_0$ and $\hat{\bm{\theta}}$ are the corresponding estimation, respectively. $\mathcal{T}_{Y\rightarrow X}$ quantify the reduction of uncertainty and the information transfer from $Y$ to $X$ by the log-likelihood ratio.

\section{Spatial-Temporal Bayesian Network}


In this work, STBN is proposed as a type of Bayesian network, that represents spatial-temporal causality in interactions between variables with high-order time delays. The proofs for the uniqueness of STBN are also provided. Moreover, HCE algorithm is proposed to uniquely identify the network structure of STBN by measuring high-order causal entropy.

\subsection{Spatial-Temporal Causality}


Spatial-temporal causality is defined here as the probabilistic knowledge describing the causal interactions between variables with high-order time delays during a dynamical process, e.g., in Eq.~(\ref{sys}). Consider an STBN $\mathcal{G}=(\mathbf{V}, \mathbf{E})$ built on Eq.~(\ref{sys}). $\mathbf{V}=\{X_1, \dots, X_n\}$ is the variable set, and $\mathbf{E}$ is the edge set. Note that, due to the introducing of high-order time delay, thus, the nodes in STBN actually denote variables at some times. For example, $X_{i, t}$ and $X_{i, t-1}$ are both variable $X_i$ in traditional Bayesian network, but they denote $X_i$ at different times here, that is, they are slightly different. To define spatial-temporal causality, two assumptions are firstly defined as:
\begin{enumerate}
	\item[1.] \textbf{Temporal assumption:} the cause precedes the effect.
	\item[2.] \textbf{Causal assumption:} the state of effect is predictable by the state of its direct cause. 
\end{enumerate}
The two assumptions are fundamental enough, and widely accepted in~\cite{OCE2015, spirtes2016, PGM2017, CCM2012, PCMCI2019, TPAMI6420843, TII10352645}. 

\begin{figure}[htbp]
	\centering
	\includegraphics[width=\linewidth]{./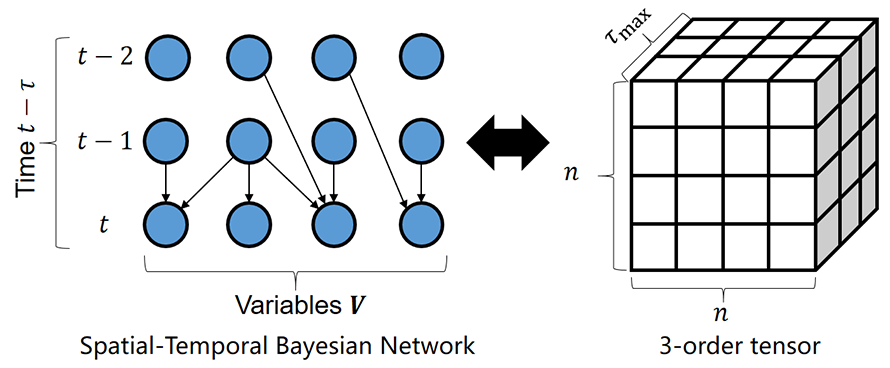}
	\caption{Diagram of Spatial-Temporal Bayesian Network.}
	\label{dag}
\end{figure}

Thus, the adjacency of $\mathcal{G}$ can be denoted as a real 3-order tensor $A \in \mathbb{R}^{n \times n \times \tau_{max}}$, as shown in Fig.~\ref{dag}. The weight $A_{ji\tau}$ denotes the causal strength of $X_{j,t-\tau} \rightarrow X_{i,t}$ with time lag $\tau$. Intuitively, STBN describes the stable multivariate causal relationship during a period of time, or the momentary causality at time $t$ in time series. Then, 
\begin{proposition}[Acyclic]
	STBN $\mathcal{G}$ is a DAG.
	\label{acyc}
\end{proposition}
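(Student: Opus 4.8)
The plan is to exhibit an explicit topological ordering of the vertices of $\mathcal{G}$ and to verify that every directed edge respects it, which is equivalent to acyclicity. The natural ordering here is the temporal one: to each vertex of $\mathcal{G}$, namely a time-indexed variable of the form $X_{i,t}$, assign the real-valued potential $\phi(X_{i,t}) = t$, its time index. The entire content of the proposition is encoded in how this potential behaves along edges.

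First I would read off from the definition of the adjacency tensor that every edge of $\mathcal{G}$ has the form $X_{j,t-s} \rightarrow X_{i,t}$, where the lag index $s$ ranges over $\{1, \dots, S\}$. The Temporal assumption (the cause precedes the effect) guarantees $s \ge 1$, so the lag is strictly positive and no instantaneous edge $X_{j,t} \rightarrow X_{i,t}$ is admitted. Consequently, for any edge the potential of the target strictly exceeds that of the source: $\phi(X_{i,t}) = t > t - s = \phi(X_{j,t-s})$.

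Next I would propagate this edgewise inequality along paths. For an arbitrary directed walk $v_0 \rightarrow v_1 \rightarrow \cdots \rightarrow v_k$ in $\mathcal{G}$, applying the inequality $k$ times yields $\phi(v_0) < \phi(v_1) < \cdots < \phi(v_k)$, so $\phi$ is strictly increasing along every directed walk. If $\mathcal{G}$ contained a directed cycle, we could take $v_0 = v_k$ with $k \ge 1$, which would force $\phi(v_0) < \phi(v_0)$, an impossibility. Hence $\mathcal{G}$ admits no directed cycle and is therefore a DAG.

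The argument is short, and the only place where genuine care is needed — the main and essentially sole obstacle — is the second step: justifying the strict inequality $s \ge 1$ rather than merely $s \ge 0$. Were zero-lag (contemporaneous) edges permitted, the potential $\phi$ would only be nondecreasing along edges, and cycles among simultaneous variables could not be excluded. Thus the proof hinges entirely on invoking the Temporal assumption to rule out $s = 0$; everything else is the routine fact that a directed graph admitting a strictly monotone real-valued potential is acyclic.
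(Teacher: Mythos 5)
Your proof is correct, and it rests on the same underlying fact as the paper's --- the temporal assumption forces every edge to go strictly forward in time --- but the execution is genuinely different and, frankly, tighter. The paper argues by contradiction: it assumes a directed cycle exists, extracts two vertices $X, Y$ on it with directed paths $X \rightarrow \dots \rightarrow Y$ and $Y \rightarrow \dots \rightarrow X$, and concludes that $X$ is simultaneously a (direct or indirect) cause and effect of $Y$, which it declares incompatible with the temporal assumption. That last step is left informal: the paper never says explicitly why mutual causation contradicts temporal precedence, whereas your potential $\phi(X_{i,t}) = t$ makes the contradiction arithmetic ($t < t$) and isolates the one place the assumption is actually needed, namely that the lag $s$ in $A_{jis}$ satisfies $s \ge 1$ so that no contemporaneous edges exist. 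You are right that this is the only delicate point --- the paper's tensor definition does not explicitly exclude $s = 0$, and the acyclicity claim would fail without that exclusion --- so flagging it is a genuine improvement rather than pedantry. The paper's route buys a proof that does not depend on the vertices being time-indexed (it would apply to any causal graph in which "cause precedes effect" is taken as primitive), while yours buys an explicit topological order, which is the cleaner and more checkable certificate of acyclicity in this time-indexed setting.
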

\begin{proof}
	If $\mathcal{G}$ is not a DAG, there is at least one path $\mathbf{p}$ which is a cycle. Then, for any two different nodes $X, Y \in \mathbf{p}$, there must be two different path $X\rightarrow \dots \rightarrow Y$ and $Y\rightarrow \dots \rightarrow X$ without intersection except for them two. Then, $X$ is the direct or indirect cause of $Y$, and the direct or indirect effect of $Y$ at the same time. This is contradictory to the \textbf{temporal assumption}. 
\end{proof}
Thus, due to the \textbf{Proposition~\ref{acyc}}, the assumptions of causal Markov and faithfulness (see \textbf{Definition~\ref{mp}} and \textbf{Definition~\ref{ff}}) can also be applied to STBN. This enables to identify network structure from observation.

\subsection{Proofs for Uniqueness}


Here, transfer entropy~\cite{TE2000} is introduced to quantify information transfer. Firstly, the causal equivalence between zero transfer entropy, unpredictability and d-separation is proved. Based on this, it is proved that parent set $\mathbf{Pa}_X$ is equal to Markov blanket $\mathbf{M}_X$ for arbitrary variable $X$, due to information path blocking. Finally, the uniqueness of STBN is obtained. Thus, first of all, two lemmas are introduced to prove the causal equivalence as follows:    
\begin{lemma}
	For a variable $X$ and another variable $Y$ preceding $X$, $X \upmodels_\mathcal{G} Y | \mathbf{Z}$ if and only if the transfer entropy $\mathcal{T}_{Y\rightarrow X|\mathbf{Z}}=0$, where $\mathbf{Z}$ is the condition set of $X$. 
	\label{mcite}
\end{lemma}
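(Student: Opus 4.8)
The plan is to prove the biconditional by chaining two equivalences: one purely information-theoretic, linking zero transfer entropy to probabilistic conditional independence, and one graphical, linking probabilistic conditional independence to $d$-separation. Reading the transfer entropy in its conditional form, I would first rewrite it as a conditional mutual information,
\begin{equation}
	\mathcal{T}_{Y\rightarrow X|\mathbf{Z}} = H(X_t\mid X_{t-1}^{(k)},\mathbf{Z}) - H(X_t\mid X_{t-1}^{(k)}, Y_{t-1}^{(k)}, \mathbf{Z}) = I\!\left(X_t; Y_{t-1}^{(k)} \mid X_{t-1}^{(k)}, \mathbf{Z}\right).
\end{equation}
Since conditional mutual information is nonnegative and vanishes exactly when the conditioned variables are conditionally independent, this immediately yields
\begin{equation}
	\mathcal{T}_{Y\rightarrow X|\mathbf{Z}} = 0 \iff X_t \upmodels Y_{t-1}^{(k)} \mid X_{t-1}^{(k)}, \mathbf{Z}.
\end{equation}

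Next I would convert this probabilistic conditional independence into a $d$-separation statement in $\mathcal{G}$. Because $\mathcal{G}$ is acyclic (as established above), both the Markov property and faithfulness (Definitions~\ref{mp} and~\ref{ff}) apply, and together they give the equivalence $X \upmodels Y \mid \mathbf{Z} \iff X \upmodels_\mathcal{G} Y \mid \mathbf{Z}$: faithfulness supplies the implication independence $\Rightarrow$ $d$-separation, and the Markov property supplies the converse. Applying this to the conditional independence obtained above closes the chain and delivers $X \upmodels_\mathcal{G} Y \mid \mathbf{Z} \iff \mathcal{T}_{Y\rightarrow X|\mathbf{Z}} = 0$.

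The step that requires care—and where I expect the real work to lie—is reconciling the two conditioning sets. The transfer entropy conditions automatically on the target's own past $X_{t-1}^{(k)}$, whereas the $d$-separation statement conditions only on $\mathbf{Z}$. This is exactly what the hypothesis $\mathbf{Pa}_X \subseteq \mathbf{Z}$ is for: once the direct causes of $X_t$ lie inside $\mathbf{Z}$, every directed path entering $X_t$ is blocked at a conditioned non-collider (a parent), so adjoining the residual history $X_{t-1}^{(k)}$ to the conditioning set alters neither the set of active paths to $Y$ nor, by the causal Markov factorization, the conditional law of $X_t$. I would make this explicit by showing that, under $\mathbf{Pa}_X \subseteq \mathbf{Z}$, the statement $X_t \upmodels Y_{t-1}^{(k)} \mid X_{t-1}^{(k)}, \mathbf{Z}$ is equivalent to $X_t \upmodels Y_{t-1}^{(k)} \mid \mathbf{Z}$, invoking the \textbf{temporal assumption} so that $Y$ and its history precede $X_t$ and hence cannot be descendants that reopen a collider. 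Verifying this redundancy of $X_{t-1}^{(k)}$ is the main obstacle; the two equivalences surrounding it are then routine.
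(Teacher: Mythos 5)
Your proof is correct and follows essentially the same route as the paper's: the paper also chains the two equivalences, using nonnegativity of conditional mutual information to get $\mathcal{T}_{Y\rightarrow X|\mathbf{Z}}=0 \iff X\upmodels Y\mid\mathbf{Z}$ and then the Markov property plus faithfulness to pass to $d$-separation. The only difference is that the paper defines the conditional transfer entropy directly as $H(X\mid\mathbf{Z})-H(X\mid\mathbf{Z},Y)$, so the reconciliation of conditioning sets you flag as the main obstacle never arises there; your extra care on that point is harmless additional rigor rather than a divergence in method.
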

\begin{proof}
	Given the variables $X$ and $Y$, then $X \upmodels_\mathcal{G} Y | \mathbf{Z} \Rightarrow X \upmodels Y | \mathbf{Z}$ because of causal Markov (see \textbf{Definition~\ref{mp}}). Thus, $\mathcal{T}_{Y\rightarrow X|\mathbf{Z}} = H(X|\mathbf{Z}) - H(X|Y, \mathbf{Z}) = 0$. On the other hand, if $\mathcal{T}_{Y\rightarrow X|\mathbf{Z}} = 0$, then $X \upmodels Y | \mathbf{Z}$. Thus, $X \upmodels Y | \mathbf{Z} \Rightarrow X {\upmodels}_\mathcal{G} Y | \mathbf{Z}$ because of faithfulness (see \textbf{Definiton~\ref{ff}}). 
\end{proof}
\begin{lemma}
	For a variable $X$ and another variable $Y$ preceding $X$, the transfer entropy $\mathcal{T}_{Y\rightarrow X|\mathbf{Z}}=0$ if and only if $\mathcal{F}_{Y\rightarrow X|\mathbf{Z}}=0$, where $\mathbf{Z}$ is a condition set of $X$.
	\label{tef}
\end{lemma}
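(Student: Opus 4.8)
The plan is to prove the stronger statement that the conditional transfer entropy and the conditional predictability are in fact \emph{equal}, i.e.\ $\mathcal{T}_{Y\rightarrow X|\mathbf{Z}} = \mathcal{F}_{Y\rightarrow X|\mathbf{Z}}$; the desired biconditional on their vanishing then follows at once. The idea is to reduce the conditional case to the bivariate identity already recorded in Eq.~(\ref{bieq}) by absorbing the conditioning set $\mathbf{Z}$ into the fixed history that appears on both sides of the transfer-entropy difference.

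First I would write the conditional transfer entropy in its defining form
\begin{equation}
	\mathcal{T}_{Y\rightarrow X|\mathbf{Z}} = H(X_t \mid X_{t-1}^{(k)}, \mathbf{Z}) - H(X_t \mid X_{t-1}^{(k)}, Y_{t-1}^{(k)}, \mathbf{Z}),
\end{equation}
and set up two competing dynamical models for $X$ of the same form as the bivariate map $x_t = f(x_{t-1}^-, y_{t-1}^-, \xi_t; \bm{\theta})$: a full model whose map depends on $(x_{t-1}^-, y_{t-1}^-, \mathbf{z}; \bm{\theta})$, and a restricted model obtained by forcing the $y_{t-1}^-$ dependence to vanish, governed by parameters in the conditional null set $\Theta_0 = \{\bm{\theta}\in\Theta \mid f(x_{t-1}^-, y_{t-1}^-, \mathbf{z}, \xi_t; \bm{\theta}) \text{ does not depend on } y_{t-1}^-\}$. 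Both models retain $\mathbf{Z}$ in their conditioning, so the two likelihoods estimate $P(X \mid X_{t-1}^{(k)}, Y_{t-1}^{(k)}, \mathbf{Z})$ and $P(X \mid X_{t-1}^{(k)}, \mathbf{Z})$ respectively.

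The key step is to observe that, once $\mathbf{Z}$ is appended to the history held fixed, the enlarged block $(X_{t-1}^{(k)}, \mathbf{Z})$ plays exactly the role that $X_{t-1}^{(k)}$ alone played in the bivariate derivation of~\citet{TE2012}. Consequently the log-likelihood-ratio argument behind Eq.~(\ref{bieq}) applies verbatim in this enlarged state space, yielding
\begin{equation}
	\hat{\mathcal{T}}_{Y\rightarrow X|\mathbf{Z}} = -\frac{1}{n-k}\log\frac{\mathcal{L}(\hat{\bm{\theta}}_0 \mid X_t^{(n)}, Y_t^{(n)}, \mathbf{Z})}{\mathcal{L}(\hat{\bm{\theta}} \mid X_t^{(n)}, Y_t^{(n)}, \mathbf{Z})} = \mathcal{F}_{Y\rightarrow X|\mathbf{Z}}.
\end{equation}
Passing to the limit in sample size and invoking consistency of the two maximum-likelihood estimators gives the population identity $\mathcal{T}_{Y\rightarrow X|\mathbf{Z}} = \mathcal{F}_{Y\rightarrow X|\mathbf{Z}}$. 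The equivalence of the two zero conditions is then immediate, since each side vanishes precisely when $Y_{t-1}^{(k)}$ supplies no additional predictive information about $X_t$ beyond $(X_{t-1}^{(k)}, \mathbf{Z})$.

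The main obstacle I anticipate is justifying that the bivariate identity of~\citet{TE2012} genuinely transfers to the conditional setting rather than merely looking as if it should. Concretely, one must verify that $\Theta_0$ as defined above is the correct restricted parameter space, that the restricted and full estimators $\hat{\bm{\theta}}_0$ and $\hat{\bm{\theta}}$ remain consistent for the respective conditional densities when $\mathbf{Z}$ (possibly containing $\mathbf{Pa}_X$) is held fixed, and that no degeneracy arises should $\mathbf{Z}$ overlap the relevant history of $X$. Once the conditioning variables are treated simply as part of the regressor block shared by both models, these checks are routine; but they are exactly the points at which the reduction to Eq.~(\ref{bieq}) must be argued carefully rather than asserted.
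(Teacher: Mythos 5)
Your proposal is correct and follows essentially the same route as the paper: both establish the asymptotic identity $\mathcal{F}_{Y\rightarrow X|\mathbf{Z}} \rightarrow \mathcal{T}_{Y\rightarrow X|\mathbf{Z}}$ by running the log-likelihood-ratio argument of \citet{TE2012} with $\mathbf{Z}$ absorbed into the conditioning block, using a nested null parameter set $\bm{\Theta}_0$ and consistency of the two maximum-likelihood estimators. The only difference is presentational: the paper writes out the likelihood factorization (with the $q(y_t,\mathbf{z}_t)$ term free of $\bm{\theta}$) and the Birkhoff--Khinchin convergence of the average log-likelihood to $-H(X|Y,\mathbf{Z})$ explicitly, whereas you defer exactly those checks to the cited bivariate derivation while correctly flagging them as the points requiring care.
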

\begin{proof} 
	If given variables $X$, $Y$ and $\mathbf{Z}$, their $k$ joint samples $(x^{(k)}, y^{(k)}, \mathbf{z}^{(k)})$ are collected from system Eq.~(\ref{sys}). That is, $x^{(k)} = \{x_{1}, \dots, x_k\}$, and the other two are the same with one-to-one temporal correspondence. They are all assumed to be ergodic, that indicates the dynamical system have visited all parts of the state space with varying time~\cite{ergodic2000}. This implies the expectation of functions (see Eq.~(\ref{sys})) on the observed samples would converge to the expectation on the state space. 
	
	Thus, transfer entropy can be defined as 
	\begin{equation}
		\mathcal{T}_{Y\rightarrow X|\mathbf{Z}} = H(X|\mathbf{Z}) - H(X|Y, \mathbf{Z}).
		\label{te}
	\end{equation}
	Then, the prediction model $f$ with parameters $\bm{\theta}$ can be defined as 
	\begin{equation}
		f(x|y, \mathbf{z}; \bm{\theta}) = P(X|Y, \mathbf{Z}; \bm{\theta}).
	\end{equation}
	Here, $P(\cdot|\cdot; \bm{\theta})$ is parameterized conditional probability distribution function. The model $f$ is assumed to be identifiable and the parameters are assumed to be unique such that $f(\cdot|\cdot; \bm{\theta}_1) \neq f(\cdot|\cdot; \bm{\theta}_2)$ for $\bm{\theta}_1 \neq \bm{\theta}_2$. Thus, the unique true parameter $\bm{\theta}^*$ satisfy 
	\begin{equation}
		f(x|y, \mathbf{z}; \bm{\theta}^*) = P(X|Y, \mathbf{Z}; \bm{\theta}^*) = P(X|Y, \mathbf{Z}).
		\label{pxyz}
	\end{equation}
	Thus, according to ergodic assumption, $x_t$ is only up to $y_t$ and $\mathbf{z}_t$ for any $t=1, \dots, k$ in $f$, the likelihood for parameters $\bm{\theta}$ can be written as 
	\begin{equation}
		\begin{aligned}	
			&\quad \mathcal{L}_k(\mathbf{\theta}|x^{(k)}, y^{(k)}, \mathbf{z}^{(k)}) \\
			=&P(x^{(k)}, y^{(k)}, \mathbf{z}^{(k)}; \bm{\theta}) \\
			=&P(x_k | x^{(k-1)}, y^{(k)}, \mathbf{z}^{(k)}; \bm{\theta}) \\
			&\times P(y_k, \mathbf{z}_k|x^{(k-1)}, y^{(k-1)}, \mathbf{z}^{(k-1)}; \bm{\theta}) \\
			&\times P(x^{(k-1)}, y^{(k-1)}, \mathbf{z}^{(k-1)}; \bm{\theta}) \\
			=&f(x_k | y_k, \mathbf{z}_k; \bm{\theta})q(y_k, \mathbf{z}_k)\mathcal{L}_{k-1} \\
			=&\prod_{t=1}^{k}f(x_t | y_t, \mathbf{z}_t; \bm{\theta})q(y_t, \mathbf{z}_t),
		\end{aligned}
	\end{equation} 
	where $P(x_t | x^{(t-1)}, y^{(t)}, \mathbf{z}^{(t)}; \bm{\theta})=f(x_t | x^{(t-1)}, y^{(t)}, \mathbf{z}^{(t)}; \bm{\theta})=f(x_t | y_t, \mathbf{z}_t; \bm{\theta})$. Moreover, the parameter set $\bm{\theta}$ only affects $X$, thus $P(y_t, \mathbf{z}_t|x^{(t-1)}, y^{(t-1)}, \mathbf{z}^{(t-1)}; \bm{\theta})$ can be simplified as a distribution $q(y_t, \mathbf{z}_t)$. $q(y_t, \mathbf{z}_t)$ does not equal to zero almost everywhere, and not refer to $\bm{\theta}$. Then, to maximize the likelihood $\mathcal{L}_k$, the average log-likelihood is obtained as
	\begin{equation}
		\begin{aligned}
			\ell(\bm{\theta}|x^{(k)}, y^{(k)}, \mathbf{z}^{(k)}) 
			&=\frac{1}{k}\log \mathcal{L}_k(\bm{\theta}|x^{(k)}, y^{(k)}, \mathbf{z}^{(k)}) \\
			&\sim\frac{1}{k}\sum_{t=1}^k\log f(x_t | y_t, \mathbf{z}_t; \bm{\theta}).
		\end{aligned}
	\end{equation}
	Thus, the unique parameter $\bm{\theta}^*$ for the prediction model $f$ can always be obtained as $k\rightarrow \infty$. Thus, according to Birkhoff-Khinchin ergodic theorem~\cite{ergodic2000} and Eq.~(\ref{pxyz}), it is obtained that
	\begin{equation}
		\ell \stackrel{a.s.}\longrightarrow \mathbb{E}[\log f(x|y, \mathbf{z}; \bm{\theta})],
		\label{as1}
	\end{equation}
	and then, 
	\begin{equation}
		\mathbb{E}[\log f(x|y, \mathbf{z}; \bm{\theta})]\stackrel{a.s.}\longrightarrow-H(X|Y, \mathbf{Z})
		\label{as2}
	\end{equation}
	as $k\rightarrow \infty$. Here, $\mathbb{E}[\cdot]$ is expectation. In practical experiments, this is to converge with small $k$.
	
	Then, a nested null model can be defined as $H_0: \bm{\theta}\in \bm{\Theta}_0$. $\bm{\Theta}_0 = \{\bm{\theta}\in \bm{\Theta}| X$ are indenpendent of $Y$ given $\mathbf{Z}\}$, and $\bm{\Theta}_0 \subseteq \bm{\Theta}$. $\bm{\Theta}$ is full parameter set. Then, the likelihood ratio can be defined as
	\begin{equation}
		\Lambda(x^{(k)}, y^{(k)}, \mathbf{z}^{(k)})	= \frac{\mathcal{L}_k(\hat{\bm{\theta}}_0|x^{(k)}, y^{(k)}, \mathbf{z}^{(k)})}{\mathcal{L}_k(\hat{\bm{\theta}}|x^{(k)}, y^{(k)}, \mathbf{z}^{(k)})},
	\end{equation}
	where $\hat{\bm{\theta}}_0$ and $\hat{\bm{\theta}}$ are maximum likelihood estimators for $\bm{\Theta}_0$ and $\bm{\Theta}$, respectively. It is intuitive that $\Lambda(x^{(k)}, y^{(k)}, \mathbf{z}^{(k)})$ measures the dependence degree of predicting $X$ given $Z$ to $Y$. Thus, the degree of predictability can be defined as
	\begin{equation}
		\mathcal{F}_{Y\rightarrow X|\mathbf{Z}} = -\frac{1}{k}\log \Lambda(x^{(k)}, y^{(k)}, \mathbf{z}^{(k)}),
	\end{equation}
	Where $\mathcal{F}_{Y\rightarrow X|\mathbf{Z}}\in [0,+\infty)$. Then, $\mathcal{F}_{Y\rightarrow X|\mathbf{Z}} \stackrel{a.s.}\longrightarrow \mathcal{T}_{Y\rightarrow X|\mathbf{Z}}$ as $k\rightarrow \infty$ obviously according to Eq.~(\ref{te}), Eq.~(\ref{as1}) and Eq.~(\ref{as2}). Thus, $\mathcal{T}_{Y\rightarrow X|\mathbf{Z}}=0$ if and only if $\mathcal{F}_{Y\rightarrow X|\mathbf{Z}}=0$.
\end{proof}
With the two lemmas, the causal equivalence can be stated as follows:
\begin{proposition}[Causal equivalence]
	In STBN $\mathcal{G}$ built on System Eq.~(\ref{sys}), zero transfer entropy, time-series unpredictability and d-separation in $\mathcal{G}$ are equivalent, that is, 
	\begin{equation}
		\mathcal{F}_{Y\rightarrow X|\mathbf{Z}} = 0 \iff \mathcal{T}_{Y\rightarrow X|\mathbf{Z}} = 0 \iff X \upmodels_\mathcal{G} Y | \mathbf{Z},
	\end{equation} 
	for any variable $Y$ preceding the target variable $X$, and $\mathbf{Z}$ is the set of some variables preceding $X$ as condition set. 
	\label{ceq}
\end{proposition}
In \textbf{Proposition~\ref{ceq}}, the transfer entropy bridges the predictability and the d-separation. This enables to understand the disappearance of $X\rightarrow Z \rightarrow Y$ and $X\leftarrow Z \leftarrow Y$ for arbitrary variables $X, Y, Z$. If $X\rightarrow Z \rightarrow Y$ is detected in STBN, then $X\upmodels_{\mathcal{G}} Y|Z \Rightarrow \mathcal{T}_{X\rightarrow Y|Z}=0 \Rightarrow \mathcal{F}_{X\rightarrow Y|Z}=0$. Thus, the information path is blocked, and $X$ provides no more predictability to $Y$ if $Z$ is detected at the same time. Thus, in $X\rightarrow Z \rightarrow Y$, the edge $X\rightarrow Z$ can be overlooked, and this is the same to do in $X\leftarrow Z \leftarrow Y$, because the minimal variable set to predict target variable is really what we want. 


Thus, with the bridge of causal equivalence, parent set $\mathbf{Pa}_X = \mathbf{M}_X$ for any target variable $X$ can be obtained in STBN. This is because all variables in $\mathbf{Pa}_X$ precede $X$, due to the \textbf{temporal assumption}, and as shown in Fig.~\ref{dag}. Note that, $\mathbf{M}_X$ is the Markov blanket of $X$ (see \textbf{Definition~\ref{mb}}), that is the minimal variable set for prediction. $\mathbf{Pa}_X = \mathbf{M}_X$ can be proved by the following proofs from \textbf{Proposition~\ref{more}} to \textbf{Corollary~\ref{paxmx}}.

\begin{proposition}
	$\forall Y \in \mathbf{Pa}_X$, the variable $Y$ satisfies $\mathcal{F}_{Y\rightarrow X|\mathbf{Pa}_X \backslash \{Y\}}>0$.
	\label{more}
\end{proposition}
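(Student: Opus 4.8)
The plan is to reduce the claim to a purely graphical statement about d-connection and then invoke \textbf{Proposition~\ref{ceq}}. Writing $\mathbf{Z} = \mathbf{Pa}_X \backslash \{Y\}$, note first that the hypothesis of \textbf{Proposition~\ref{ceq}} is met: every member of $\mathbf{Pa}_X$ precedes $X$ by the \textbf{temporal assumption}, so $Y$ precedes $X$ and $\mathbf{Z}$ consists of variables preceding $X$. Reading the causal-equivalence chain in its contrapositive form, $\mathcal{F}_{Y\rightarrow X|\mathbf{Z}} = 0$ holds exactly when $X \upmodels_\mathcal{G} Y | \mathbf{Z}$; hence, together with the fact established in \textbf{Lemma~\ref{tef}} that $\mathcal{F}_{Y\rightarrow X|\mathbf{Z}} \in [0,+\infty)$, it suffices to show that $X$ and $Y$ are \emph{not} d-separated given $\mathbf{Z}$ in order to conclude $\mathcal{F}_{Y\rightarrow X|\mathbf{Z}} > 0$.

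First I would exhibit an unblocked path. Since $Y \in \mathbf{Pa}_X$, by definition there is a direct edge $Y \rightarrow X$ in $\mathcal{G}$; call this single-edge path $\mathbf{p}$. The only nodes on $\mathbf{p}$ are its endpoints $Y$ and $X$, and $\mathbf{p}$ has no intermediate node. I would then check the blocking criterion of the d-separation definition against each endpoint. Neither endpoint carries a v-structure on $\mathbf{p}$, since a v-structure requires two distinct nodes of $\mathbf{p}$ pointing into the same node, which is impossible on a single edge; thus item~1 of the definition never applies. Item~2 would require the endpoint to lie in $\mathbf{Z}$, but $Y \notin \mathbf{Z}$ by construction and $X \notin \mathbf{Pa}_X$ because a parent is distinct from its target. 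Hence no node of $\mathbf{p}$ blocks it, $\mathbf{p}$ is unblocked, and therefore $X \nupmodels_\mathcal{G} Y | \mathbf{Z}$.

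To close the argument I would combine the two ingredients: because $X$ and $Y$ are d-connected given $\mathbf{Z}$, \textbf{Proposition~\ref{ceq}} forbids $\mathcal{F}_{Y\rightarrow X|\mathbf{Z}} = 0$, and with $\mathcal{F}_{Y\rightarrow X|\mathbf{Z}} \geq 0$ this forces the strict inequality $\mathcal{F}_{Y\rightarrow X|\mathbf{Pa}_X \backslash \{Y\}} > 0$.

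The only real subtlety---the main obstacle, though a mild one---is making the d-separation bookkeeping airtight for a length-one path: one must confirm that the definition permits the blocking node $v$ to be an endpoint, and then verify directly that neither endpoint can block, rather than silently assuming that blocking happens only at interior nodes. Once this is nailed down, the direct edge $Y \rightarrow X$ simply cannot be screened off by any conditioning set avoiding $X$ and $Y$, and the proposition follows at once.
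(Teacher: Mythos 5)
Your proposal is correct and follows essentially the same route as the paper: both arguments observe that a direct edge $Y \rightarrow X$ cannot be blocked by any conditioning set, conclude $Y \nupmodels_\mathcal{G} X \mid \mathbf{Pa}_X \backslash \{Y\}$, and then invoke \textbf{Proposition~\ref{ceq}} together with nonnegativity of $\mathcal{F}$ to get the strict inequality. The only difference is that you spell out the endpoint-by-endpoint blocking check that the paper leaves implicit, which is a welcome but not substantive addition.
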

\begin{proof}
	For any variable $Y \in \mathbf{Pa}_X$, no variable set can d-separate $X$ and $Y$. Thus, $\mathcal{F}_{Y\rightarrow X|\mathbf{Pa}_X \backslash \{Y\}}>0$ is obtained from $Y \nupmodels_\mathcal{G} X|\mathbf{Pa}_X \backslash \{Y\}$ according to \textbf{Proposition~\ref{ceq}} (i.e., $\mathcal{F}_{Y\rightarrow X| \mathbf{Pa}_X \backslash \{Y\}} = 0  \iff Y \upmodels_\mathcal{G} X | \mathbf{Pa}_X \backslash \{Y\}$). 
\end{proof}

Independence is used to search Markov blanket (see \textbf{Definition~\ref{mb}}) in traditional Bayesian network~\cite{PC2000}. But due to Markov equivalence, the identified Markov blanket $\mathbf{M}_X$ are larger than or equal to the true parent set $\mathbf{Pa}_X$. That is, spurious edges are to be identified inevitably. However, this can be addressed in STBN, because \textbf{temporal assumption} prevents causal information flow from effect to cause. Thus,
\begin{proposition}
	$\forall Y$ preceding $X$ and $Y\notin \mathbf{Pa}_X$, the variable $Y$ satisfies $\mathcal{F}_{Y\rightarrow X|\mathbf{Pa}_X}=0$.
	\label{eq0}
\end{proposition}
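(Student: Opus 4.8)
The plan is to reduce the analytic statement to a purely graphical one and then prove the graphical claim using the acyclicity of $\mathcal{G}$ together with the \textbf{temporal assumption}. By \textbf{Proposition~\ref{ceq}}, the equality $\mathcal{F}_{Y\rightarrow X|\mathbf{Pa}_X}=0$ holds if and only if $X \upmodels_\mathcal{G} Y \mid \mathbf{Pa}_X$; here $\mathbf{Pa}_X$ is an admissible conditioning set, since every direct cause of $X$ precedes $X$ by the \textbf{temporal assumption}. Hence it suffices to establish the d-separation statement $X \upmodels_\mathcal{G} Y \mid \mathbf{Pa}_X$, after which the conclusion is immediate.

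First I would record the consequence of the \textbf{temporal assumption} that drives the argument: every descendant of $X$ is a direct or indirect effect of $X$ and therefore occurs strictly after $X$ in time, whereas $Y$ precedes $X$. Thus $Y$ is not a descendant of $X$, and by hypothesis $Y\notin\mathbf{Pa}_X$, so $Y$ is a non-parent, non-descendant of $X$. This is exactly the configuration governed by the local directed Markov property, namely that the parent set of a node d-separates it from all of its non-descendants; one could simply cite this as a standard fact, but I would prove it directly from the definition of d-separation.

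To do so I would take an arbitrary path $\mathbf{p}$ joining $Y$ and $X$ and show $\mathbf{Pa}_X$ blocks it, by inspecting the edge of $\mathbf{p}$ incident to $X$. If that edge is $W\rightarrow X$, then $W\in\mathbf{Pa}_X$, and $W$ is not a collider on $\mathbf{p}$ because its edge toward $X$ points away from $W$; hence the second blocking condition of d-separation applies at $W$. The degenerate length-one case $Y\rightarrow X$ cannot occur precisely because $Y\notin\mathbf{Pa}_X$, so $W$ is genuinely an interior node. If instead the edge is $X\rightarrow W$, then walking along $\mathbf{p}$ away from $X$ there must exist a first head-to-head (collider) node $v$: otherwise $\mathbf{p}$ would be a purely forward directed path from $X$ to $Y$, contradicting that $Y$ is a non-descendant. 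Since $v$ is reached from $X$ by a directed segment, $v$ is itself a descendant of $X$.

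The hard part will be closing this second case at the collider $v$. To invoke the collider blocking condition I must verify that neither $v$ nor any descendant of $v$ lies in $\mathbf{Pa}_X$, and this is exactly where acyclicity is essential: a descendant of $X$ that were also a parent of $X$ would let me concatenate the directed path $X\rightarrow\cdots\rightarrow v\rightarrow\cdots$ with an edge back into $X$, producing a cycle and contradicting that $\mathcal{G}$ is a DAG. Every descendant of $v$ is likewise a descendant of $X$, so none can be a parent of $X$ either; therefore $v$ blocks $\mathbf{p}$. As $\mathbf{p}$ was arbitrary, $\mathbf{Pa}_X$ blocks every path, giving $X\upmodels_\mathcal{G}Y\mid\mathbf{Pa}_X$, and then $\mathcal{F}_{Y\rightarrow X|\mathbf{Pa}_X}=0$ follows from \textbf{Proposition~\ref{ceq}}.
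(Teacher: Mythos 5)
Your proof is correct and follows essentially the same route as the paper's: reduce $\mathcal{F}_{Y\rightarrow X|\mathbf{Pa}_X}=0$ to the d-separation statement $X\upmodels_\mathcal{G} Y\mid\mathbf{Pa}_X$ via \textbf{Proposition~\ref{ceq}} and then argue that $\mathbf{Pa}_X$ blocks every path from $Y$ to $X$. The only difference is one of detail: the paper simply asserts the blocking claim, whereas you actually prove it (the local Markov property for the non-descendant $Y$, using the temporal assumption to rule out $Y$ being a descendant and acyclicity to handle the collider case), which fills a gap the paper leaves implicit.
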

\begin{proof}
	In STBN $\mathcal{G}$, for any variable $Y$ preceding $X$ and $Y\notin \mathbf{Pa}_X$, $Y \upmodels_\mathcal{G} X|\mathbf{Pa}_X$ because all information paths from $Y$ to $X$ are blocked by $\mathbf{Pa}_X$. Thus, $Y \upmodels_\mathcal{G} X|\mathbf{Pa}_X \backslash \{Y\}$ because $\mathbf{Pa}_X = \mathbf{Pa}_X \backslash \{Y\}$. Thus, $\mathcal{F}_{Y\rightarrow X|\mathbf{Pa}_X \backslash \{Y\}}=0$ according to \textbf{Proposition~\ref{ceq}}. 
\end{proof}
Then, it is obtained that 
\begin{corollary}
	$\forall Y$ preceding $X$, $Y \notin \mathbf{Pa}_X$ if and only if $\mathcal{F}_{Y\rightarrow X|\mathbf{Pa}_X \backslash \{Y\}}=0$. 
	\label{inpa}
\end{corollary}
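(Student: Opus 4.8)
The plan is to split the biconditional into its two implications and obtain each one almost directly from the two propositions that immediately precede the corollary, after taking care of how the conditioning set behaves under removal of $Y$.

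First I would prove the forward direction. Assuming $Y$ precedes $X$ and $Y \notin \mathbf{Pa}_X$, the elementary set identity $\mathbf{Pa}_X \backslash \{Y\} = \mathbf{Pa}_X$ holds, since $Y$ is not a member of $\mathbf{Pa}_X$. Proposition~\ref{eq0} supplies $\mathcal{F}_{Y \rightarrow X | \mathbf{Pa}_X} = 0$ under exactly these hypotheses, and substituting the set identity yields $\mathcal{F}_{Y \rightarrow X | \mathbf{Pa}_X \backslash \{Y\}} = 0$, as required.

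Next I would prove the reverse direction by contraposition. Suppose instead that $Y \in \mathbf{Pa}_X$. The preceding proposition, which asserts that every parent assists in prediction, then guarantees $\mathcal{F}_{Y \rightarrow X | \mathbf{Pa}_X \backslash \{Y\}} > 0$. Because the predictability $\mathcal{F}$ is nonnegative by construction (it lies in $[0, +\infty)$, as recorded in the proof of Lemma~\ref{tef}), strict positivity coincides with being nonzero; hence $\mathcal{F}_{Y \rightarrow X | \mathbf{Pa}_X \backslash \{Y\}} \neq 0$. Contraposing, $\mathcal{F}_{Y \rightarrow X | \mathbf{Pa}_X \backslash \{Y\}} = 0$ forces $Y \notin \mathbf{Pa}_X$, which completes this direction.

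The difficulty here is conceptual only in appearance: the two input propositions are phrased with slightly different conditioning sets, and the one step that genuinely does any work is recognizing that $\mathbf{Pa}_X \backslash \{Y\}$ and $\mathbf{Pa}_X$ coincide exactly on the branch $Y \notin \mathbf{Pa}_X$. Once this alignment is noted, the corollary needs no fresh appeal to Proposition~\ref{ceq} or to the transfer-entropy identities of Lemma~\ref{tef}; it is a purely logical repackaging of the two flanking propositions into a single biconditional characterization of non-parenthood.
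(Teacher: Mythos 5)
Your proof is correct and matches the paper's intent exactly: the paper offers no separate argument for this corollary ("it can be directly obtained"), and the forward direction via the identity $\mathbf{Pa}_X \backslash \{Y\} = \mathbf{Pa}_X$ is already spelled out inside the paper's proof of Proposition~\ref{eq0}, while the reverse direction is just the contrapositive of the preceding proposition on parents. Nothing is missing.
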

Thus, if $\mathbf{V}$ is the full variable set, then 
\begin{proposition}
	For any nested variable set $\mathbf{M} \supseteq \mathbf{Pa}_X$, and all variables in $\mathbf{M}$ precede $X$, then $\forall Y \in \mathbf{V}\backslash\mathbf{M} \cup \{X\}$, $Y \upmodels_\mathcal{G} X | \mathbf{M}$, that is, $\mathcal{F}_{Y\rightarrow X|\mathbf{M}}=0$. 
	\label{nestedM}
\end{proposition}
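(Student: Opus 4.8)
The plan is to reduce the statement to a purely graphical d-separation claim and then invoke \textbf{Proposition~\ref{ceq}}. Since $\mathcal{F}_{Y\rightarrow X|\mathbf{M}}$ is meaningful only for $Y$ preceding $X$ (as in \textbf{Proposition~\ref{ceq}}), I treat exactly those $Y$, which by the \textbf{temporal assumption} are non-descendants of $X$; likewise every member of $\mathbf{M}$ precedes $X$ and is therefore a non-descendant of $X$. Thus it suffices to prove $Y \upmodels_\mathcal{G} X | \mathbf{M}$, because \textbf{Proposition~\ref{ceq}} then yields $\mathcal{F}_{Y\rightarrow X|\mathbf{M}}=0$ at once.

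To establish the d-separation I would show that every path $\mathbf{p}$ joining $X$ and $Y$ is blocked by $\mathbf{M}$, arguing by cases on the edge of $\mathbf{p}$ incident to $X$. If that edge points into $X$, its other endpoint $W$ is a parent, so $W\in\mathbf{Pa}_X\subseteq\mathbf{M}$; since $W$ carries an outgoing arrow toward $X$ it is not a collider on $\mathbf{p}$, and hence blocks $\mathbf{p}$ by item~2 of the d-separation definition. This is essentially the argument already used in \textbf{Proposition~\ref{eq0}}.

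The substantive case is when the edge leaves $X$, i.e. $X\rightarrow W_1$. Here I would follow the maximal initial directed segment $X\rightarrow W_1\rightarrow\cdots\rightarrow W_m$ of $\mathbf{p}$. Each $W_i$ is a descendant of $X$; in particular $W_m\neq Y$ (otherwise $Y$ would be a descendant of $X$, contradicting that $Y$ precedes $X$), so $\mathbf{p}$ continues past $W_m$. By maximality the next edge points into $W_m$, making $W_m$ a collider $W_{m-1}\rightarrow W_m\leftarrow W_{m+1}$. Since $W_m$ is a descendant of $X$, so are all of its descendants; as $\mathbf{M}$ contains no descendant of $X$, neither $W_m$ nor any of its descendants lies in $\mathbf{M}$, and therefore $W_m$ blocks $\mathbf{p}$ by item~1.

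The main obstacle to anticipate is exactly this collider behaviour: enlarging the conditioning set from $\mathbf{Pa}_X$ to the larger set $\mathbf{M}$ could in principle re-activate a previously blocked path through a collider. The crux is that the \textbf{temporal assumption} forces every variable of $\mathbf{M}$ to precede $X$ and hence to be a non-descendant of $X$, so no collider lying on a directed frontier below $X$ can ever be opened by conditioning on $\mathbf{M}$. Once every path is shown blocked, $Y \upmodels_\mathcal{G} X | \mathbf{M}$ holds, and \textbf{Proposition~\ref{ceq}} completes the proof.
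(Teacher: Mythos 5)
Your proof is correct, and it takes a more careful route than the paper's. The paper derives $Y \upmodels_\mathcal{G} X | \mathbf{Pa}_X$ from \textbf{Proposition~\ref{eq0}} and then asserts that $\mathbf{M}$ also blocks every path ``because $\mathbf{M} \supseteq \mathbf{Pa}_X$.'' That monotonicity step is not valid for d-separation in general: enlarging a conditioning set can unblock a path by activating a collider (e.g.\ $X\rightarrow C\leftarrow Y$ is blocked by $\varnothing$ but not by $\{C\}$), and the paper offers no justification for why this cannot happen here. Your proof supplies exactly the missing ingredient: a direct case analysis on the edge of the path incident to $X$, showing that paths entering $X$ are blocked by a parent in $\mathbf{M}$ acting as a non-collider, while paths leaving $X$ must contain a collider on the directed frontier below $X$ which, being a descendant of $X$, can never lie in (nor have a descendant in) the temporally-constrained set $\mathbf{M}$. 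What the paper's argument buys is brevity and reuse of \textbf{Proposition~\ref{eq0}}; what yours buys is an actually airtight justification, making explicit that the \textbf{temporal assumption} on $\mathbf{M}$ --- not mere set inclusion --- is what rules out collider activation. The one small caveat is that your argument restricts attention to $Y$ preceding $X$, whereas the statement quantifies over all $Y \in \mathbf{V}\backslash(\mathbf{M}\cup\{X\})$; this is a reasonable reading since $\mathcal{F}_{Y\rightarrow X|\mathbf{M}}$ is only defined for such $Y$ (and the paper's own proof implicitly does the same via \textbf{Proposition~\ref{eq0}}), but it is worth flagging that descendants of $X$ are not covered by either proof.
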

\begin{proof}
	Let $\mathbf{M} \supseteq \mathbf{Pa}_X$ and all variables in $\mathbf{M}$ precede $X$. For any selected $Y \in \mathbf{V}\backslash\mathbf{M} \cup \{X\}$, then $Y\notin \mathbf{Pa}_X$. According to \textbf{Proposition~\ref{eq0}}, $\mathcal{F}_{Y\rightarrow X|\mathbf{Pa}_X}=0$ is obtained. And then, $Y \upmodels_\mathcal{G} X | \mathbf{Pa}_X$ due to the causal equivalence. That means $\mathbf{Pa}_X$ blocks any information paths between $X$ and $Y$, as shown in Fig.~\ref{figure: prop5coro2}(a). Thus, $\mathbf{M}$ also blocks any information paths between $X$ and $Y$ because $\mathbf{M} \supseteq \mathbf{Pa}_X$. Thus, $Y \upmodels_\mathcal{G} X | \mathbf{M}$. 
\end{proof}

\begin{figure}[t!]
	\centering
	\includegraphics[width=\linewidth]{./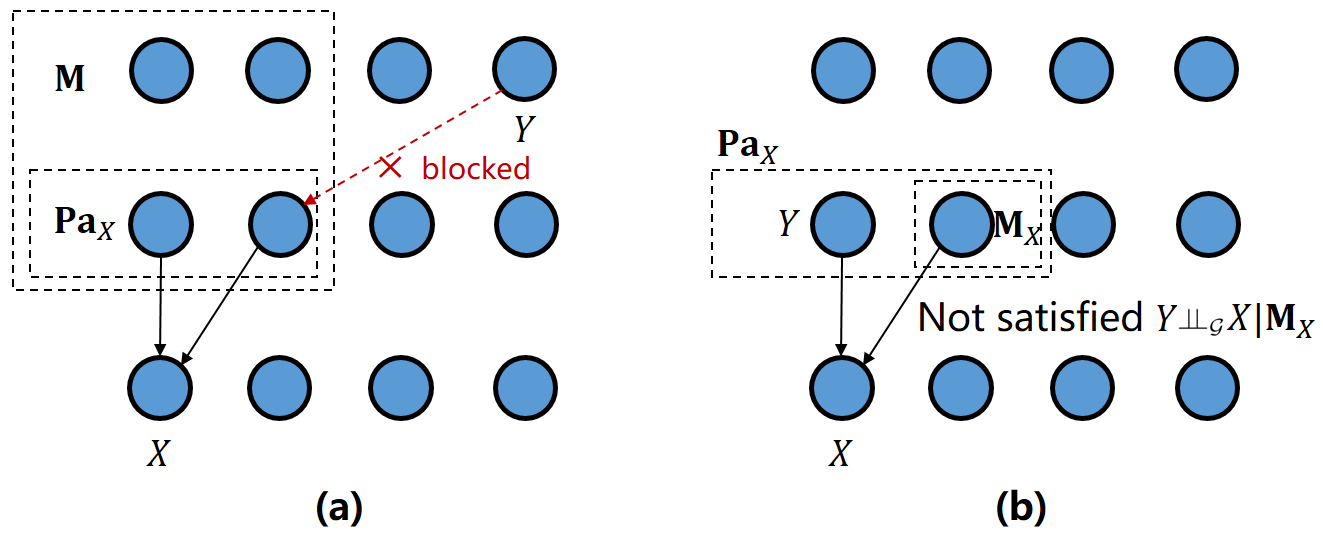}
	\caption{Diagrams for \textbf{Proposition~\ref{nestedM}} and \textbf{Corollary~\ref{paxmx}}. (a) In \textbf{Proposition~\ref{nestedM}}, for any $\mathbf{M}\supseteq \mathbf{Pa}_X$, $\forall Y \in \mathbf{V}\backslash\mathbf{M} \cup \{X\}$, $\mathbf{M}$ blocks any information paths from $Y$ to $X$. (b) In \textbf{Corollary~\ref{paxmx}}, if $\mathbf{Pa}_X \supset \mathbf{M}_X$, $\exists Y\in \mathbf{Pa}_X$, but $Y\notin \mathbf{M}_X$, then $Y\nupmodels_\mathcal{G} X | \mathbf{M}_X$ can be obtained.}
	\label{figure: prop5coro2}
\end{figure}

Thus, $\mathbf{Pa}_X = \mathbf{M}_X$ can be obtained with few proofs.
\begin{corollary}
	$\mathbf{Pa}_X = \mathbf{M}_X$ for variable $X$ in STBN $\mathcal{G}$,
	\label{paxmx}
\end{corollary}
\begin{proof}	
	Let's discuss it in categories. Firstly, we suppose $\mathbf{M}_X$ is the Markov blanket of the variable $X$ according to \textbf{Definition~\ref{mb}}. Then, on the one hand, if $\mathbf{Pa}_X \subseteq \mathbf{M}_X$, $\mathbf{Pa}_X = \mathbf{M}_X$ obviously due to \textbf{Proposition~\ref{nestedM}}, as shown in Fig.~\ref{figure: prop5coro2}(a). Thus, $\mathbf{M}_X$ is the minimal realization of $\mathbf{M}$. On the other hand, if $\mathbf{Pa}_X \supset \mathbf{M}_X$, then $\exists Y$ precedes $X$, $Y\in \mathbf{Pa}_X$, but $Y\notin \mathbf{M}_X$. Then, we obtain $Y\nupmodels_\mathcal{G} X|\mathbf{M}_X$ because there is a direct connection between $Y$ and $X$, and $\mathbf{M}_X$ cannot d-separates them, as shown in Fig.~\ref{figure: prop5coro2}(b). But due to \textbf{Definition~\ref{mb}}, and $\mathbf{M}_X$ is the Markov blanket of $X$, thus we obtain $Y\upmodels_\mathcal{G} X|\mathbf{M}_X$. Thus, this is contradictory. Thus, $\mathbf{Pa}_X = \mathbf{M}_X$ only.
	
\end{proof}
Thus, the variables providing prediction knowledge to $X$ is only up to causal parents $\mathbf{Pa}_X$ in STBN $\mathcal{G}$. Meanwhile, that means $\mathbf{Pa}_X$ can be identified in parallel, because the neighbor variables at the same time provide no any prediction knowledge to $X$. This induces decomposability property of STBN $\mathcal{G}$, as follows:
\begin{proposition}[Decomposability]
	STBN $\mathcal{G}$ at time $t$ can be decomposed into $n$ subgraphs $\mathcal{G}_{i,t}=(\mathbf{V}, \mathbf{E}_{i,t})$ for simultaneous variables $X_{i,t}, i=1, \dots, n$. Then, d-separation in $\mathcal{G}$ is the same as that in $\mathcal{G}_{i,t}$, that is, $\upmodels_\mathcal{G} \iff \upmodels_{\mathcal{G}_{i,t}}$.
	\label{decom}
\end{proposition}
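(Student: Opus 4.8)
The plan is to realize each subgraph $\mathcal{G}_{i,t}$ as the slice of the adjacency tensor $A$ obtained by fixing the target index $i$: set $\mathbf{E}_{i,t}=\{(X_{j,t-s}\to X_{i,t}) : A_{jis}\neq 0\}$, so that $\mathcal{G}_{i,t}=(\mathbf{V},\mathbf{E}_{i,t})$ retains exactly the edges that point into the target $X_{i,t}$. Since every edge of $\mathcal{G}$ has a unique head, the heads partition $\mathbf{E}=\bigsqcup_{i=1}^{n}\mathbf{E}_{i,t}$, which is the asserted decomposition. The structural fact I would record first is that, by the \textbf{temporal assumption}, $X_{i,t}$ has no outgoing edge to any variable preceding it; relative to the queries of interest (those with $Y$ and $\mathbf{Z}$ preceding $X_{i,t}$, as in \textbf{Proposition~\ref{ceq}}) the node $X_{i,t}$ therefore acts as a sink, and every incident edge used by such a query is a fan-in edge lying in $\mathbf{E}_{i,t}$.

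For the forward implication $\upmodels_\mathcal{G}\Rightarrow\upmodels_{\mathcal{G}_{i,t}}$ I would invoke monotonicity of d-separation under edge deletion. Any path in $\mathcal{G}_{i,t}$ is also a path in $\mathcal{G}$, and each node keeps its collider/non-collider type along that path, since the incident path-edges are identical in both graphs. If the path is blocked in $\mathcal{G}$ by a non-collider $v\in\mathbf{Z}$, the same $v$ blocks it in $\mathcal{G}_{i,t}$; if it is blocked by a collider $v$ whose descendants avoid $\mathbf{Z}$, then because the descendant set can only shrink when edges are removed, $v$ still blocks it in $\mathcal{G}_{i,t}$. Hence every $\mathcal{G}_{i,t}$-path inherits a blocker and the separation persists.

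The backward implication $\upmodels_{\mathcal{G}_{i,t}}\Rightarrow\upmodels_\mathcal{G}$ is the crux, and it is where the sink property and the Markov-blanket identity do the work. Because $X_{i,t}$ is a sink, every $\mathcal{G}$-path leaving it starts $X_{i,t}\leftarrow P$ with $P\in\mathbf{Pa}_{X_{i,t}}$, and along that path $P$ is a chain or a fork, never a collider; conditioning on $\mathbf{Pa}_{X_{i,t}}$ therefore blocks every such path, which is precisely the statement that $\mathbf{Pa}_{X_{i,t}}$ is the Markov blanket $\mathbf{M}_{X_{i,t}}$ established in \textbf{Corollary~\ref{paxmx}}. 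I would then split on whether the other endpoint $Y$ is a parent. If $Y\notin\mathbf{Pa}_{X_{i,t}}$, then no edge incident to $Y$ has $X_{i,t}$ as its head, so $Y$ is isolated in $\mathcal{G}_{i,t}$, matching $Y\upmodels_\mathcal{G}X_{i,t}\mid\mathbf{Pa}_{X_{i,t}}$ from \textbf{Proposition~\ref{eq0}}; if $Y\in\mathbf{Pa}_{X_{i,t}}$, the direct edge $Y\to X_{i,t}$ survives into $\mathcal{G}_{i,t}$ and is unblockable, so $Y$ fails to be separated from $X_{i,t}$ in both graphs. Thus, for the conditioning sets relevant to identifying parents (those containing $\mathbf{Pa}_{X_{i,t}}\setminus\{Y\}$, as in \textbf{Corollary~\ref{inpa}}), the two separation relations agree in both directions.

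The main obstacle is exactly this backward direction: deleting every edge except the fan-in of $X_{i,t}$ destroys the ancestral paths $X_{i,t}\leftarrow P\leftarrow\cdots$ that remain active in $\mathcal{G}$ under light conditioning, so a naive biconditional over arbitrary $\mathbf{Z}$ cannot survive. The resolution I would make explicit is that the equivalence is used only for parent-identification queries, where the first edge out of the sink is always a parent edge preserved in $\mathbf{E}_{i,t}$ and where $\mathbf{Pa}_{X_{i,t}}$, being the Markov blanket, blocks all downstream extensions. This is what lets the local slice $\mathcal{G}_{i,t}$ reproduce the d-separation verdicts of the global graph $\mathcal{G}$ and thereby justifies learning each node's structure in a distributed way.
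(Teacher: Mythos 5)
Your proposal is correct in substance, but it takes a markedly more careful route than the paper, whose proof of the d-separation claim is essentially an appeal to the figure: the paper constructs the same decomposition you do (each $\mathbf{E}_{i,t}$ is the fan-in of $X_{i,t}$, justified by \textbf{Corollary~\ref{paxmx}} and the absence of contemporaneous edges), and then declares $\upmodels_\mathcal{G} \iff \upmodels_{\mathcal{G}_{i,t}}$ ``trivial to see.'' You instead supply an actual argument for each direction, and in doing so you surface something the paper glosses over. Your forward direction (monotonicity of d-separation under edge deletion: surviving paths keep their collider types, descendant sets of colliders only shrink) is sound and is the missing content behind the paper's assertion. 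Your backward direction correctly identifies that the unrestricted biconditional is false --- for example, with $Y \to W \to X_{i,t}$ and $Y \notin \mathbf{Pa}_{X_{i,t}}$, $Y$ is isolated in $\mathcal{G}_{i,t}$ and hence separated from $X_{i,t}$ given $\varnothing$, while the chain is active in $\mathcal{G}$ --- and you repair it by restricting to the parent-identification queries actually used by the algorithm, i.e.\ conditioning sets containing $\mathbf{Pa}_{X_{i,t}}\setminus\{Y\}$, where the sink property and the Markov-blanket identity of \textbf{Corollary~\ref{paxmx}} make the two verdicts agree. The trade-off is that the paper's version reads as a stronger, unqualified equivalence (which is what \textbf{Proposition~\ref{decom}} literally asserts), while yours proves a weaker but actually true statement that still suffices for the distributed search in the HCE algorithm; if you present your version, state the restriction on the query class explicitly in the proposition rather than only in the proof.
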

\begin{proof}
	According to \textbf{Corollary~\ref{paxmx}}, the predictability for each target variable $X_{i,t}, i=1, \dots, n$ is only up to their parents $\mathbf{Pa}_i$. And, there must be no connection between these variables at the same time $t$, and no connection backtracking from time $t$ to the past, because of the \textbf{temporal assumption}. Thus, (i) For each variable at time $t$, e.g., $X_{1, t}$, the variables $X_{2, t}, \dots, X_{n, t}$ would not be in the set $\mathbf{Pa}_{1}$, where $\mathbf{Pa}_{1}$ is the causal parents of $X_{1, t}$. (ii) For each variable at time $t$, e.g., any causal pathways pointing to $X_{1, t}$ through $X_{2, t}, \dots, X_{n, t}$ would be blocked. Thus, the causal links regarding $X_{2, t}, \dots, X_{n, t}$ and the other pathways through them can be deleted. Then, the reduced subgraph $\mathcal{G}_{1, t}$ is obtained. It is the same to $X_{2, t}, \dots, X_{n, t}$, and $\mathcal{G}_{2, t}, \dots, \mathcal{G}_{n, t}$ can also be obtained, respectively. Thus, $\mathcal{G}$ can be decomposed into $n$ subgraphs $\mathcal{G}_{i,t}=(\mathbf{V}, \mathbf{E}_{i,t})$. Here, $\mathbf{E}_{i,t}$ is the directed connections from each variable in $\mathbf{Pa}_i$ to $X_i$. That is, $\mathcal{G}$ is the linear combination of all $\mathcal{G}_{i,t}$. Moreover, $\upmodels_\mathcal{G} \iff \upmodels_{\mathcal{G}_{i,t}}$ is trivial to see in Fig.~\ref{decomposition}.  
\end{proof}

\begin{figure}[t!]
	\centering
	\includegraphics[width=\linewidth]{./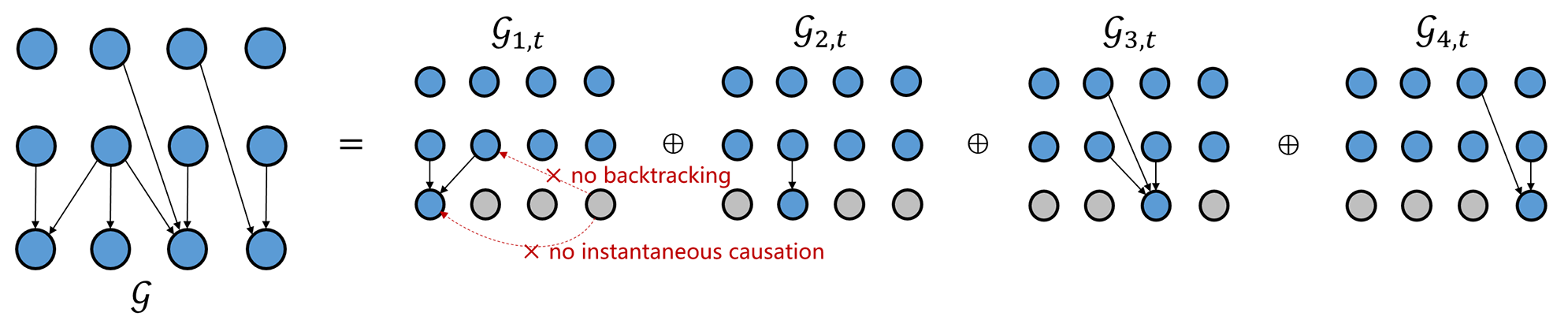}
	\caption{Diagram of decomposing STBN and combining subgraphs. Here, $\bigoplus$ represents linear combination.}
	\label{decomposition}
\end{figure}

Then, the uniqueness property of STBN $\mathcal{G}$ can be finally obtained as follows: 
\begin{proposition}[Uniqueness]
	The network structure of STBN $\mathcal{G}$ is unique.
\end{proposition}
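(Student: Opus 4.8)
The plan is to reduce the uniqueness of the whole tensor-valued network to the uniqueness of each local parent set, and then to argue that each such parent set is pinned down by the data-generating system $\mathcal{S}$ alone. First I would invoke the decomposability result (\textbf{Proposition~\ref{decom}}): since $\mathcal{G}$ is the linear combination of the $n$ subgraphs $\mathcal{G}_{i,t}$, it suffices to show that every subgraph $\mathcal{G}_{i,t}$—equivalently, every parent set $\mathbf{Pa}_X$ for $X=X_{i,t}$—is uniquely determined. This decouples the global question into $n$ independent local problems, each concerning only the incoming edges of a single target variable, together with the weights $A_{jis}$ attached to them.

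Next I would show that each $\mathbf{Pa}_X$ is uniquely fixed. The key is \textbf{Corollary~\ref{inpa}}, which characterizes membership in $\mathbf{Pa}_X$ purely through the predictability functional: for every $Y$ preceding $X$, $Y\notin\mathbf{Pa}_X$ if and only if $\mathcal{F}_{Y\rightarrow X|\mathbf{Pa}_X\backslash\{Y\}}=0$. Because $\mathcal{F}$ is an intrinsic quantity of $\mathcal{S}$—the log-likelihood-ratio predictability established in \textbf{Proposition~\ref{ceq}}, independent of any postulated graph—this criterion is a property of the observed dynamics rather than of a chosen structure. Combined with \textbf{Corollary~\ref{paxmx}}, which identifies $\mathbf{Pa}_X$ with the Markov blanket $\mathbf{M}_X$ and hence with the \emph{minimal} conditioning set, the parent set is singled out uniquely: any two candidate parent sets consistent with the independence relations of $\mathcal{S}$ must coincide with this minimal set, and the weights, being the corresponding predictability values, are fixed along with it.

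The remaining and genuinely distinctive ingredient is the orientation of edges. In an ordinary BN the same conditional-independence pattern can be realized by several DAGs that differ by edge reversals (the Markov equivalence class), so fixing the skeleton does not fix the graph. Here the \textbf{temporal assumption} forces every edge $X_{j,t-s}\rightarrow X_{i,t}$ to point from the earlier time to the later one, and no edge can be reversed without violating it. Thus once the adjacency pattern is fixed by the predictability structure, the directions are automatically determined and no equivalence-class ambiguity survives; the entire tensor $A$, and with it $\mathcal{G}$, is unique.

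I expect the main obstacle to lie in the apparent circularity of \textbf{Corollary~\ref{inpa}}, which tests membership of $Y$ by conditioning on $\mathbf{Pa}_X\backslash\{Y\}$—the very set we are trying to determine. To make the argument airtight I would lean on faithfulness (\textbf{Definition~\ref{ff}}) together with the minimality clause of the Markov blanket in \textbf{Corollary~\ref{paxmx}}: faithfulness guarantees that the predictability relations reflect exactly the d-separations of the graph, while minimality guarantees that the conditioning set satisfying them is unique rather than merely existent. Establishing that this minimal set is well defined—for instance, that the intersection of two valid blankets is again valid—is the delicate point; once it is secured, uniqueness follows by combining the decomposition with the temporal orientation argument.
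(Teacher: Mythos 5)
Your high-level strategy coincides with the paper's: decompose $\mathcal{G}$ into the subgraphs $\mathcal{G}_{i,t}$ via \textbf{Proposition~\ref{decom}} and reduce uniqueness of the tensor to uniqueness of each local parent set, with the temporal assumption disposing of any orientation ambiguity. However, the step you yourself flag as delicate is exactly the step that carries the whole proposition, and you do not close it. Saying that $\mathbf{Pa}_X$ ``is the minimal conditioning set and hence unique'' presupposes that a minimal Markov blanket is unique, which is precisely what must be shown; your suggested repair (that the intersection of two valid blankets is again a blanket) invokes the intersection property of conditional independence, a nontrivial fact requiring additional positivity-type hypotheses that you neither state nor verify. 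As written, the argument for local uniqueness is circular in the way you anticipate, and the proposed escape route is left as an unproven assertion.

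The paper closes this gap without any appeal to minimality or to intersection of blankets, by a direct mutual-containment argument that you should adopt. Suppose $\mathbf{Pa}_X$ and $\widetilde{\mathbf{Pa}}_X$ are the parent sets of $X$ in two candidate subgraphs $\mathcal{G}_{i,t}$ and $\widetilde{\mathcal{G}}_{i,t}$. Take any $Y\in\widetilde{\mathbf{Pa}}_X$. Because $Y\rightarrow X$ is a direct edge of $\widetilde{\mathcal{G}}_{i,t}$, \emph{no} conditioning set whatsoever d-separates $Y$ from $X$ in $\widetilde{\mathcal{G}}_{i,t}$; in particular $Y\nupmodels_{\widetilde{\mathcal{G}}_{i,t}} X \mid \mathbf{Pa}_X\backslash\{Y\}$, where the conditioning set is taken from the \emph{other} candidate graph, so no circularity arises. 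Since both graphs are Markov and faithful to the same distribution generated by $\mathcal{S}$, their d-separation relations agree with the conditional independences of $P$ and hence with each other, giving $Y\nupmodels_{\mathcal{G}_{i,t}} X\mid\mathbf{Pa}_X\backslash\{Y\}$, i.e.\ $\mathcal{F}_{Y\rightarrow X\mid\mathbf{Pa}_X\backslash\{Y\}}>0$; \textbf{Corollary~\ref{inpa}} then forces $Y\in\mathbf{Pa}_X$. This yields $\widetilde{\mathbf{Pa}}_X\subseteq\mathbf{Pa}_X$, and the symmetric argument gives the reverse inclusion. With that substitution your proof becomes complete; the rest of your outline (assembly of the unique subgraphs and the temporal orientation of edges) matches the paper.
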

\begin{proof}
	According to the principle of decomposability, we have $n$ subgraphs $\mathcal{G}_{i,t}$ of $\mathcal{G}$. In $\mathcal{G}_{i,t}$, the parents of the target variable $X_{i,t}$ is unique. we can first prove it.
	
	Let $X$ be any target variable $X_{i,t}$. Then, $\mathbf{Pa}_X$ and $\widetilde{\mathbf{Pa}}_X$ are both the parents of $X$ in $\mathcal{G}_{i,t}$ and $\widetilde{\mathcal{G}}_{i,t}$, respectively. $\forall Y \in \widetilde{\mathbf{Pa}}_X$, no variable set $\mathbf{Z}$ excluding $X$ and $Y$ can satisfy $Y\upmodels_{\widetilde{\mathcal{G}}_{i,t}} X | \mathbf{Z}$. Thus, $\mathbf{Pa}_X\backslash \{Y\}$, one possible $\mathbf{Z}$, is also not able to do so. Thus, $Y\nupmodels_{\widetilde{\mathcal{G}}_{i,t}} X | \mathbf{Pa}_X\backslash \{Y\}$, and then $Y\nupmodels_{{\mathcal{G}}_{i,t}} X | \mathbf{Pa}_X\backslash \{Y\}$ because $\upmodels_{\widetilde{\mathcal{G}}_{i,t}} \Rightarrow \upmodels_\mathcal{G} \Rightarrow \upmodels_{\mathcal{G}_{i,t}}$ (see decomposability in \textbf{Proposition~\ref{decom}}). Then, $Y\in \mathbf{Pa}_X$ is obtained by \textbf{Corollary~\ref{inpa}} and the causal equivalence in \textbf{Proposition~\ref{ceq}}. Thus, $\widetilde{\mathbf{Pa}}_X \subseteq \mathbf{Pa}_X$. In a similar way, $\widetilde{\mathbf{Pa}}_X \supseteq \mathbf{Pa}_X$ can be also obtained. Thus, $\mathbf{Pa}_X=\widetilde{\mathbf{Pa}}_X$. Thus, $\mathcal{G}_{i,t} = \widetilde{\mathcal{G}}_{i,t}$. 
	
	Thus, the network structure of STBN $\mathcal{G}$ is unique because all subgraphs $\mathcal{G}_{i,t}$ are unique, and $\mathcal{G}$ is just the linear combination of multiple unique subgraphs. Thus, there is no Markov equivalence issue in STBN.
\end{proof}



\subsection{High-order Causal Entropy Algorithm}







To identify unique network structure of STBN, HCE algorithm is proposed by the transfer entropy under the principle of information path blocking. It is a special case of transfer entropy, thus we call it high-order causal entropy here. The pseudocode of HCE algorithm is shown in \textbf{Algorithm~\ref{hce}}. 

Due to the decomposability property (see \textbf{Proposition~\ref{decom}}), the network can be decomposed into $n$ subgraphs. These subgraphs can be identified independently and combined to be a complete network finally. Thus, HCE algorithm can be implemented by $n$ child processes. Each child process searches causal parents $\mathbf{Pa}_i$ for each variable $X_i\in\{X_1, \dots, X_n\}$. This is in a paralleled way instead of $n$-times sequential iterations.

Then, each child process can be decomposed into two modules by parameters $\tau_{max}, \alpha$ and $\beta$. In the first module, for target variable $X_i$, HCE algorithm calculates the causal entropy for each historical variable and each time delay $\tau=1, \dots, \tau_{max}$. This module saves part of variables from the given full condition set, and finally obtains possible $\mathbf{Pa}_i$. In the second module, HCE algorithm removes spurious causation from $\mathbf{Pa}_i$, and the subgraph is obtained at the end. 

The parameters $\alpha$ and $\beta$ are both used to control the bound of zero causal entropy according to \textbf{Corollary~\ref{inpa}}. If the value of causal entropy is less than $\alpha$ or $\beta$, the value is to approximate to be zero. Thus, the obtained $\mathbf{Pa}_i$ becomes narrower with bigger $\alpha$ and $\beta$. Finally, the network can be obtained by combining all subgraphs. 

Moreover, the time complexity of HCE algorithm is $\mathcal{O}(n^3\tau_{max})$. The main computation is up to searching causal parents for each variable at time $t$ and estimating causal entropy for each possible edge. The time to search causal parents is $\mathcal{O}(n\tau_{max})$ because all historical variables are to be traversed. Then, note that HCE algorithm can be in paralleled, thus this time is the same for each variable, and the time to combine all subgraphs is $\mathcal{O}(1)$, that can be ignored. Meanwhile, KSG estimator~\cite{KSG2004PRE} is adopted to measure the causal entropy with time complexity $\mathcal{O}(n^2)$ in worst case. But it can be accelerated by using KD-tree neighbor search.

\begin{algorithm}
	\LinesNumbered
	\KwIn{Observation variables $\mathbf{V}=\{{X}_1, \dots, {X}_n\}$ and maximum time lag $\tau_{max}$.}
	\KwOut{An STBN $\mathcal{G}$.}
	
	\For{$i=1, \dots, n$ in parallel}{
		Initialize causal parent set $\mathbf{Pa}_i=\varnothing$; \\
		\For{$\tau=1, \dots, \tau_{max}$}{
			Let condition set $\mathbf{Z}=\{X_{1,t-\tau}, \dots, X_{n,t-\tau}\}$; \\
			\For{$j=1, \dots, n$}{
				$\mathcal{T}_{X_{j,t-\tau}\rightarrow X_{i,t}|\mathbf{Z}\backslash \{X_{j,t-\tau}\}} = H(X_{i,t}|\mathbf{Z}\backslash \{X_{j,t-\tau}\}) - H(X_{i,t}|\mathbf{Z})$; \\
				\eIf{$\mathcal{T}_{X_{j,t-\tau}\rightarrow X_{i,t}|\mathbf{Z}\backslash \{X_{j,t-\tau}\}} > \alpha$}
				{
					append $X_{j,t-\tau}$ into $\mathbf{Pa}_i$;
				}{
					remove $X_{j,t-\tau}$ from $\mathbf{Z}$;
				}
			}
		}
		
		\For{{\rm each} $Y\in \mathbf{Pa}_i$}{
			$\mathcal{T}_{Y\rightarrow X_{i,t}|\mathbf{Pa}_i\backslash \{Y\}}=H(X_{i,t}|\mathbf{Pa}_i\backslash \{Y\}) - H(X_{i,t}|\mathbf{Pa}_i)$; \\
			\If{$\mathcal{T}_{Y\rightarrow X_{i,t}|\mathbf{Pa}_i\backslash \{Y\}} < \beta$}
			{
				remove $Y$ from $\mathbf{Pa}_i$;
			}
		}
	}
	Construct the network $\mathcal{G}$ with $\mathbf{Pa}_1, \dots, \mathbf{Pa}_n$;
	\caption{High-order Causal Entropy Algorithm}
	\label{hce}
\end{algorithm}

\section{Experiments}

\subsection{Synthetic Data}

In the experiments, the performance of HCE algorithm is tested on synthetic data generated from STBNs. For example, as shown in Fig.~\ref{ucnd}, $f_1, f_2$ are defined as $f_1(x) = x$ and $f_2(x) = x + 5x^2 e^{-\frac{x^2}{20}}$. The state of variable at time $t$ is certainly calculated by the historical states of all variables, such that $x_{1,t} = 0.2f_2(x_{1,t-1}) + 0.3f_1(x_{env,t-1}) + \mathcal{N}(0, 1)$, and $\mathcal{N}(0, 1)$ is noise conforming to standard normal distribution. Moreover, the environment variable $X_{env}$ is the common parent for variable $X_1$ and variable $X_4$, and $X_{env}$ has no causal parents.

Then, to generate trending time-series data, the state of environment variable $X_{env}$ is set to be a uniformly increasing sequence as shown in Fig.~\ref{timeseries}. Clearly, the trajectory of $X_{env}$ is in the range of $1.3\sim8.0$ within 2000 time steps. Meanwhile, the other observation variables are also presented in Fig.~\ref{timeseries}. They all fluctuate in different and limited ranges, and have no property of periodicity. As shown in Fig.~\ref{ds}, it is presented the distributions of four observation variables within first 1000 time steps and within later 1000 time steps respectively. Clearly, dataset shift occurs, so that regression-based models are not to work.  

Moreover, all data are randomly generated from STBNs like Fig.~\ref{ucnd}, but we do not list them all here. In practice, their network size $n$ and the number of edges are kept in a certain proportion, and the number of edges is at least twice the network size. All the edges are set at random to guarantee the randomization of the experiments. Moreover, the sample size $k$ is always fixed to be 2000.

\begin{figure}[t!]
	\centering
	\includegraphics[width=0.8\linewidth]{./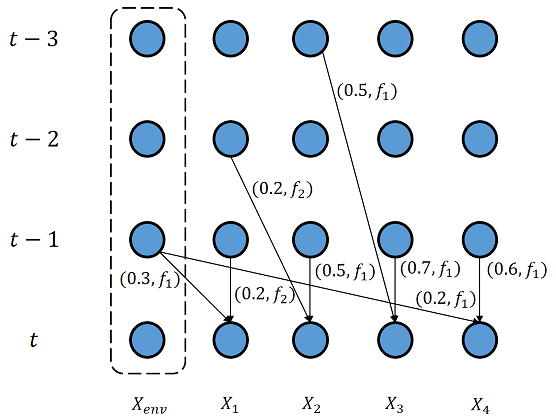}
	\caption{An example of STBN.}
	\label{ucnd}
\end{figure}

\begin{figure}[t!]
	\centering
	\includegraphics[width=\linewidth]{./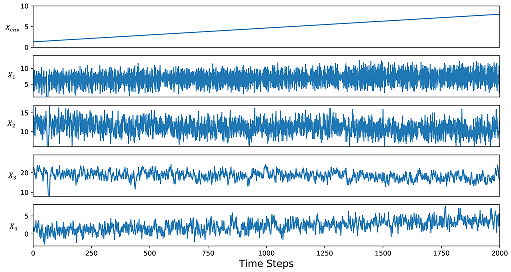}
	\caption{Trending time series of five variables.}
	\label{timeseries}
\end{figure}

\begin{figure}[t!]
	\centering
	\includegraphics[width=\linewidth]{./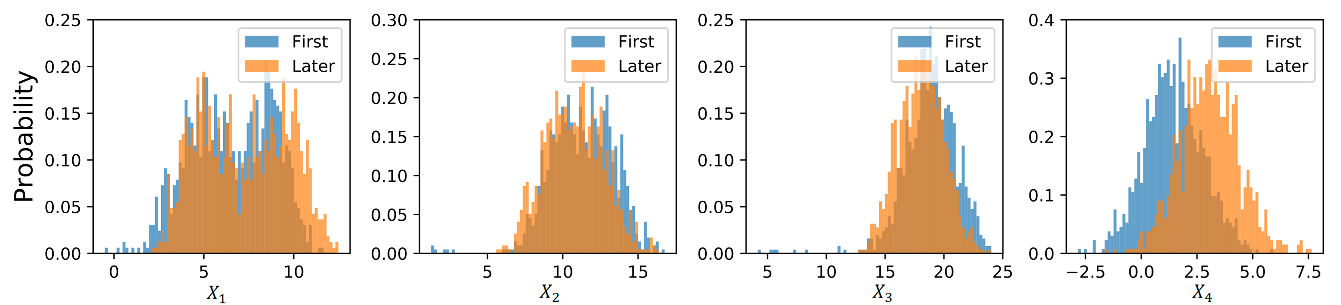}
	\caption{Dataset shift of the four observation variables.}
	\label{ds}
\end{figure}

\subsection{Evaluation Metrics}

In the experiments, two indices are adopted to measure the performance of model, including true positive rate (TPR) and false positive rate (FPR). They are defined as 
\begin{equation}
	\begin{aligned}
		TPR &= \frac{true~positive}{true~positive + false~negative}, \\
		FPR &= \frac{false~positive}{false~positive + true~negative}.
	\end{aligned}
\end{equation}

\subsection{Results}


To conduct comparative experiments, HCE algorithm is compared with GC test~\cite{GC1969}, convergent cross mapping (CCM) algorithm~\cite{CCM2012}, PCMCI algorithm~\cite{PCMCI2019}, and DYNOTEARS algorithm~\cite{dynotears2020PRML}. Among them, GC test is a classic algorithm to identify time-series causality by vector regression model. CCM algorithm is proposed to detect causality with high-order time delay in complex ecosystem based on the tool of nonlinear state space reconstruction. PCMCI algorithm is proposed to identify Full Time Graph from large time-series dataset. DYNOTEARS algorithm is a score-based algorithm that minimizes a penalized loss subject to an acyclicity constraint. But note that, it is based on linear modeling, and it allows instantaneous causation.

In the experiments, the truth of STBN is hidden firstly, and then the network structure is identified by these algorithms. The TPR and FPR are both calculated after running these algorithms on the synthetic data. The parameters of HCE are set as that $\tau_{max}=5, \alpha=0.01$ and $\beta=0.02$. 

As shown in Fig.~\ref{roc}, the receiver operating characteristic (ROC) curve is drawn with TPRs and FPRs on time series generated by STBN in Fig.~\ref{ucnd}. The ROC curve of HCE covers the curves generated from the other algorithms. That is, the area under curve (AUC) of HCE is higher than that of other algorithms. Thus, HCE algorithm has better generalization.

Moreover, the performance of HCE algorithm is tested on dataset with bigger network size. As shown in Fig.~\ref{acc}, it is clear that HCE algorithm performs generally better than the other baseline algorithms with increasing network size. Note that all results for boxplot are tested on dataset generated from STBNs with different network sizes and different network structures. To eliminate test errors, the data generation processes are implemented at least 20 times. For all algorithms, the FPRs are fixed to be below 0.1 as possible, so that the TPRs are easy to compare.

\setcounter{figure}{8}
\begin{figure*}[t!]
	\centering
	\includegraphics[width=\linewidth]{./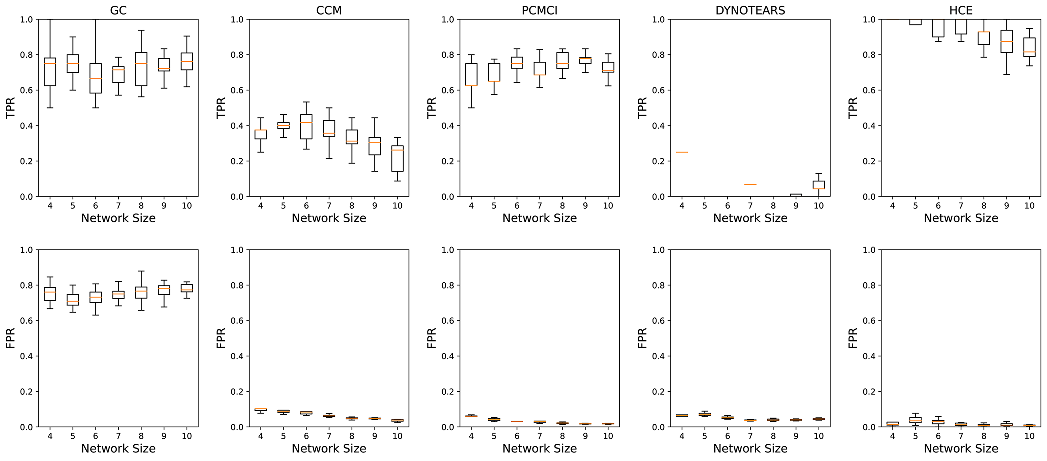}
	\caption{The comparison to identification accuracy of HCE algorithm and other baseline algorithms.}
	\label{acc}
\end{figure*}

\setcounter{figure}{7}
\begin{figure}[htbp]
	\centering
	\includegraphics[width=0.92\linewidth]{./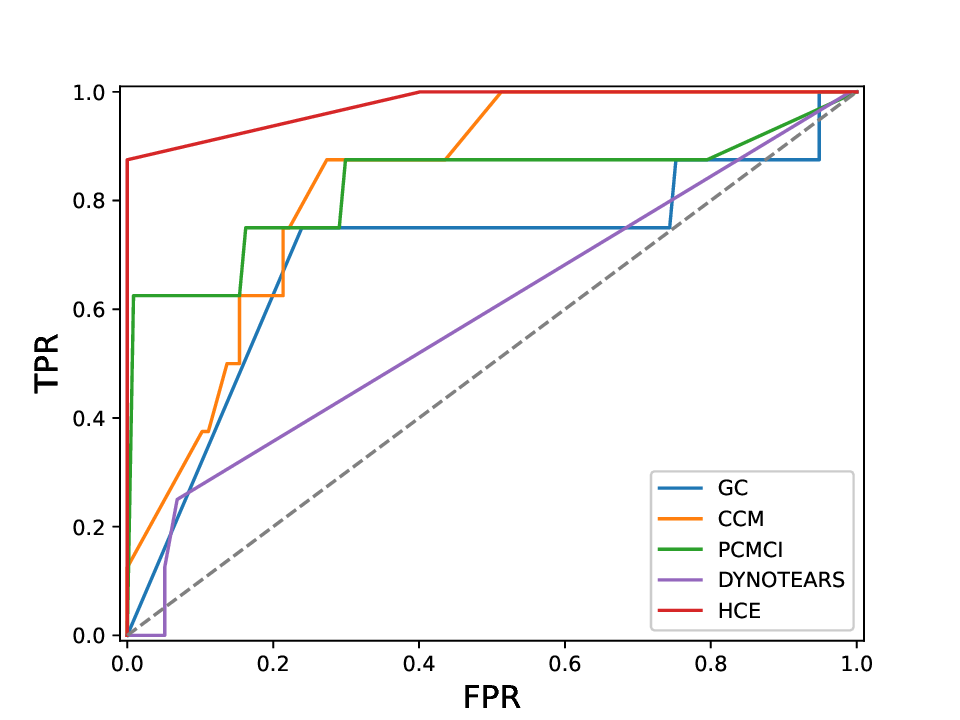}
	\caption{The ROC curve of HCE and other algorithms.}
	\label{roc}
\end{figure}

As shown in Fig.~\ref{acc}, the TPRs of HCE algorithm gradually decrease with the increase of network size, but it is still higher than that of PCMCI algorithm with the same FPRs. By contrast, GC and CCM algorithms have lower TPRs and larger variances at each running time. Moreover, DYNOTEARS algorithm poorly performs on the TPRs when the FPRs are fixed, with increasing network size. Also, it poorly performs on the ROC curve, as shown in Fig.~\ref{roc}. The reasons are as follows: (i) The modeling of DYNOTEARS is linear, but the data generation is nonlinear in the experiments of this work. (ii) DYNOTEARS considers the instantaneous effects between variables, that induces the huge error to detect many spurious instantaneous causal links, as shown in Fig.~\ref{figure: nonzero}. Clearly, there are many non-zero causal links when time lag $\tau = 0$, but actually, these links are spurious. Thus, if given some observation variables, and the variable number is less than 10, HCE algorithm can identify the true structure of STBN, even though the data distribution of time series is shifted with time.

\setcounter{figure}{9}
\begin{figure}[htbp]
	\centering
	\includegraphics[width=0.92\linewidth]{./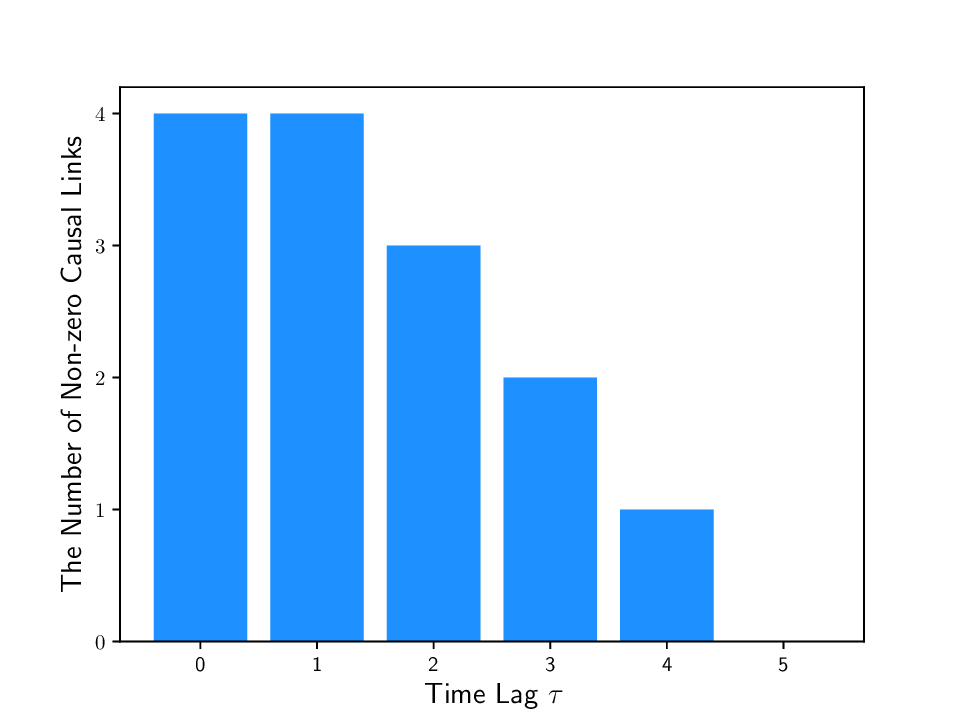}
	\caption{The distribution of non-zero causal links on an example of DYNOTEARS.}
	\label{figure: nonzero}
\end{figure}

\section{Conclusion and Future Work}



In this work, STBN is proposed to model the spatial-temporal causality by introducing the principle of information transfer. From this perspective, information path blocking explains the disappearance of $X\rightarrow Z \rightarrow Y$ and $X\leftarrow Z \leftarrow Y$ in STBN, and further induces the uniqueness of network structure. Based on this, HCE algorithm is designed to identify the unique network structure by measuring transfer entropy with information path blocking. Sufficient experiments are conducted to support the main conclusions regarding the performance of HCE algorithm, compared with GC test, CCM algorithm, PCMCI algorithm, and DYNOTEARS algorithm. The results show the state-of-the-art identification accuracy of HCE algorithm on dataset with time-varying shift and increasing network size.

In future works, we will focus on the goal of causal inference based on the proposed STBN model, that is, to further build a framework for inferring spatial-temporal counterfactuals. One possible approach is to use STBN to initialize the network in interactive dynamics~\cite{gao2022autonomous}. The interactive dynamics can be sparsely identified with the network. And then, the network structure can be intervened, and the counterfactuals can be solved by resetting the initialization values. Another solution to realize this can be found in paper~\cite{ness2019integrating}, but we find it only generalized on one-order Markov process, which is not universal. A non-temporal but important case can be found in paper~\cite{pawlowski2020deep}. If use the deep generative model to model the network, the supported network size would be larger, but the explainability (or interpretability) would be reduced.

Moreover, we find it different in the intervening operations between STBNs and the non-temporal Bayesian networks. As stated in~\cite{Aprimer2016}, if a non-temporal Bayesian network is intervened, the network structure would be changed, and the intervened probability distributions are to be recalculated though the backdoor criterion and front-door criterion. But it is different in STBNs, an STBN models the momentary causality at time $t$, thus an intervention on STBN would only be defined at that time, but not during a period of times. Thus, to describe the momentary interventions accurately, a group of interventions at different timestamps are needed. This makes the intervening operations to STBNs more complex, and more flexible meanwhile. This is also one focus of the future works.

\bibliographystyle{IEEEtran}
\bibliography{ieeetai}

\vspace{0mm}
\begin{IEEEbiography}[{\includegraphics[width=1in,height=1.25in,clip,keepaspectratio]{./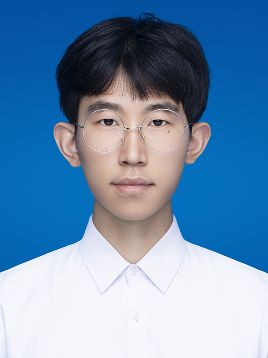}}]{Mingyu Kang}
	received M.S. degree of computer technology from Southeast University in China in 2021. He is currently a Ph.D candidate with the Jiangsu Key Laboratory of Networked Collective Intelligence, School of Cyber Science and Engineering, Southeast University, Nanjing, China. His research interests include complex system, complex network, deep learning and causal inference.
\end{IEEEbiography}

\vspace{0mm}
\begin{IEEEbiography}[{\includegraphics[width=1in,height=1.25in,clip,keepaspectratio]{./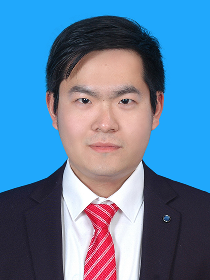}}]{Duxin Chen}
	received the B.S. degree in automatic control and the Ph.D. degree in control science and engineering from the Huazhong University of Science and Technology,Wuhan,China, in 2013 and 2018, respectively. He is currently an associate professor with the School of Mathematics, and vice director of the Jiangsu Provincial Scientific Research Center of Applied Mathematics, Southeast University, Nanjing, China. He is also one of the direction leaders of the Huawei-SEUJoint Innovation Lab of Networked Collective Intelligence. His research interests include causal inference, prediction/generation, and system identification techniques for complex networks and systems science, artificial intelligence related theory and applications.
\end{IEEEbiography}

\vspace{0mm}
\begin{IEEEbiography}[{\includegraphics[width=1in,height=1.25in,clip,keepaspectratio]{./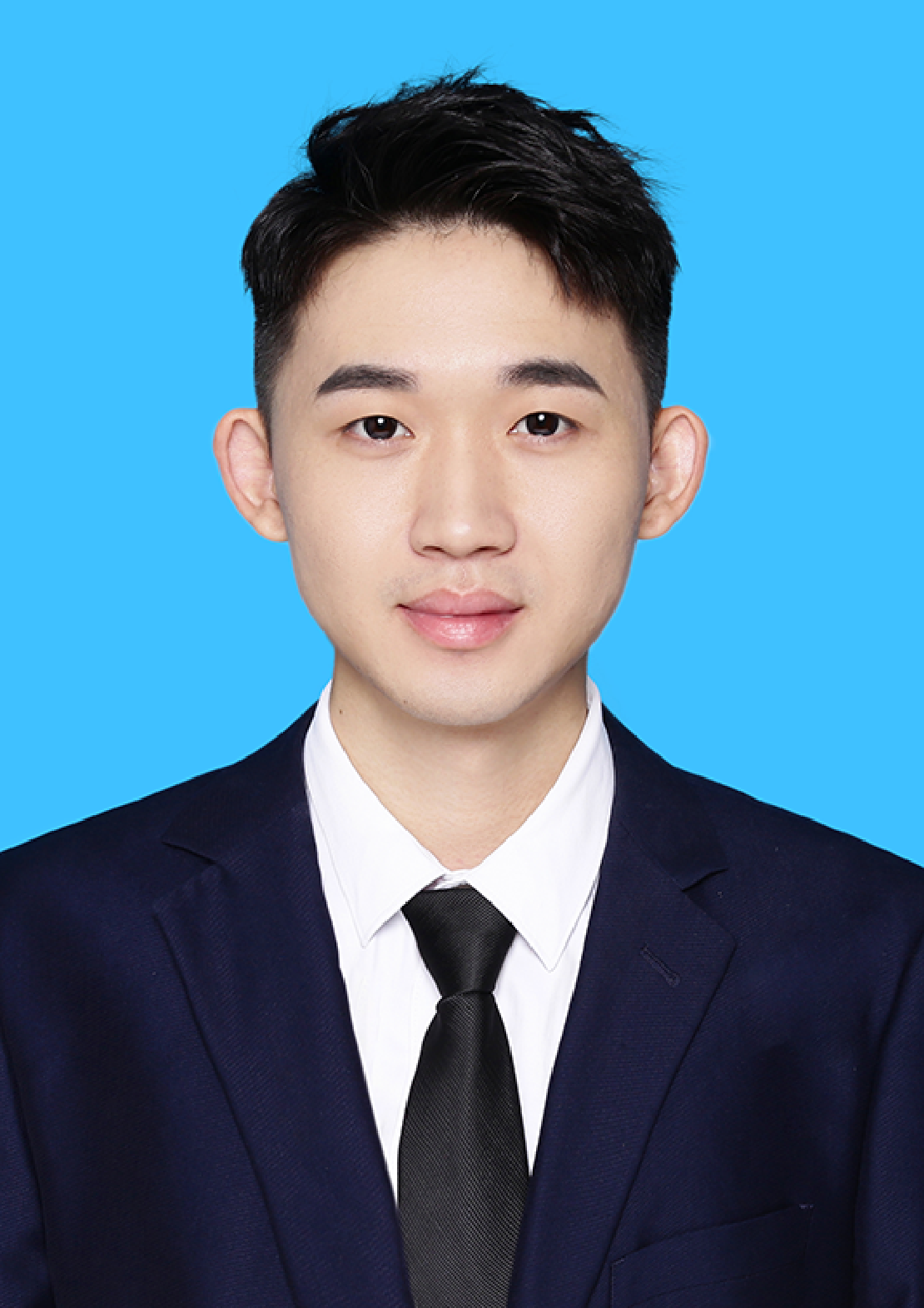}}]{Ning Meng}
	received M.S. degree of cyberspace security from Southeast University in China in 2023. He is currently a senior engineer in the R\&D department of Volkswagen China Technology Company (VCTC).
\end{IEEEbiography}

\vspace{0mm}
\begin{IEEEbiography}[{\includegraphics[width=1in,height=1.25in,clip,keepaspectratio]{./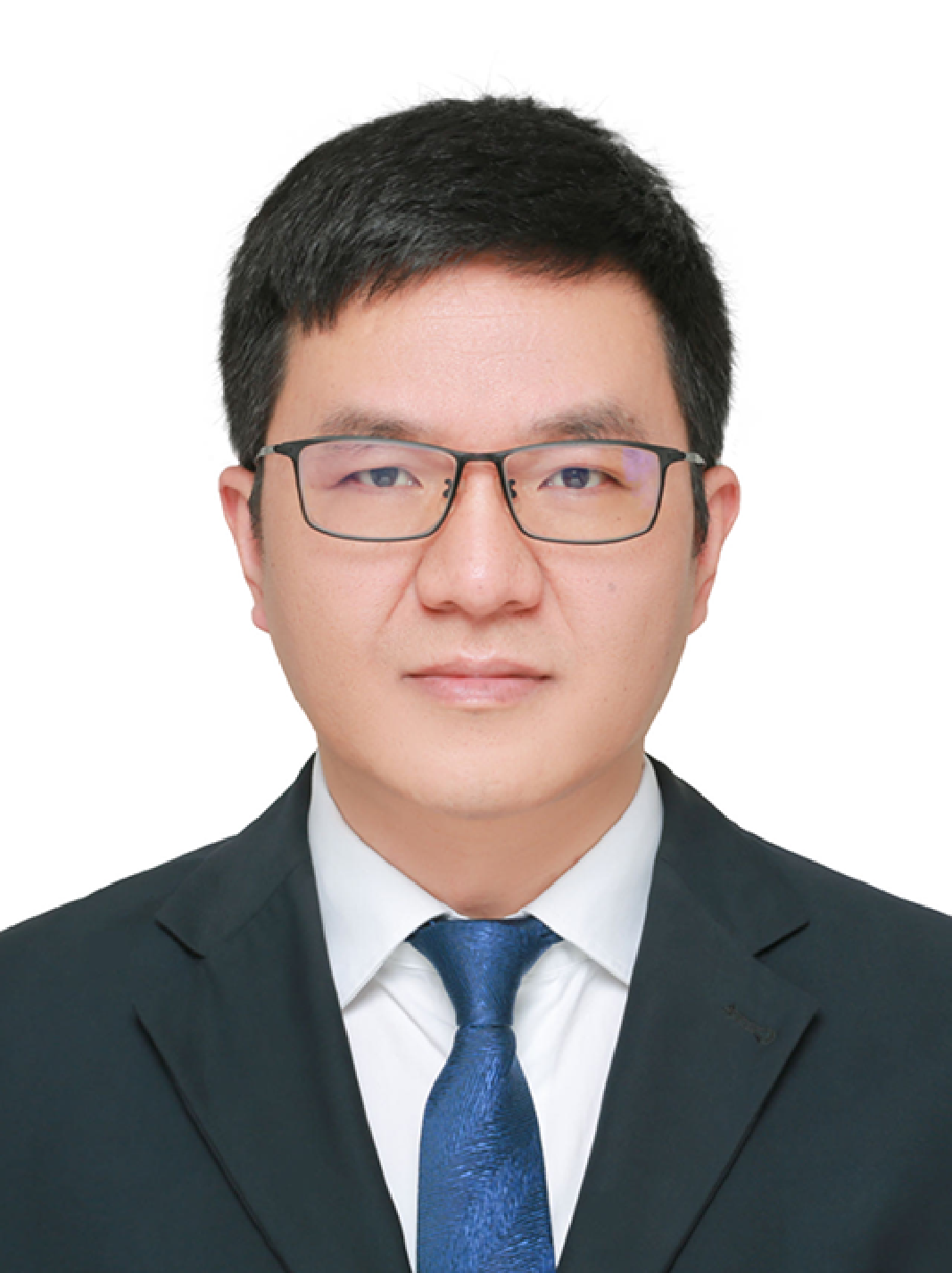}}]{Gang Yan}
	received the B.S. and Ph.D. degrees in Electronic Engineering from University of Science and Technology of China, Hefei, China, in 2005 and 2010, respectively. He was with Temasek Laboratories at National University of Singapore, Singapore, and Network Science Institute at Northeastern University, Boston, USA as Research Scientist and Research Fellow in 2010-2013 and 2013-2016, respectively. He joined Tongji University, Shanghai, China in 2017 and is currently a Distinguished Professor. 
	
	His research lies at the interface between complex systems and artificial intelligence and his research results have been published in Nature, Nature Physics, Nature Computational Science, Nature Communications, NeurIPS, AAAI, etc. He was the recipient of Distinguished Young Scholar from National Science Foundation of China in 2022, and the Excellent Editor of the IEEE Transactions on Network Science and Engineering in 2021.
\end{IEEEbiography}

\vspace{0mm}
\begin{IEEEbiography}[{\includegraphics[width=1in,height=1.25in,clip,keepaspectratio]{./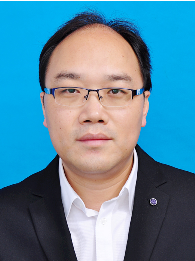}}]{Wenwu Yu} (S'07-M'12-SM'15) received the B.Sc. degree in information and computing science and M.Sc. degree in applied mathematics from the Department of Mathematics, Southeast University, Nanjing, China, in 2004 and 2007, respectively, and the Ph.D. degree from the Department of Electronic Engineering, City University of Hong Kong, Hong Kong, China, in 2010. Currently, he is the Dean in the School of Mathematics, the Deputy Associate Director of National Center of Applied Mathematics in Southeast University of Jiangsu, the Deputy Director of Jiangsu Provincial Scientific Research Center of Applied Mathematics, Deputy Associate Director of Jiangsu Provincial Key Laboratory of Networked Collective Intelligence, and a Full Professor with the Endowed Chair Honor in Southeast University, China.
	
Dr. Yu held several visiting positions in Australia, China, Germany, Italy, the Netherlands, and the USA. His research interests include multi-agent systems, complex networks and systems, disturbance control, distributed optimization, machine learning, game theory, cyberspace security, smart grids, intelligent transportation systems, big-data analysis, etc. 

Dr. Yu serves as an Editorial Board Member of several flag journals, including IEEE Transactions on Circuits and Systems II, IEEE Trans. Industrial Cyber-Physical Systems, IEEE Transactions on Industrial Informatics, IEEE Transactions on Systems, Man, and Cybernetics: Systems, Science China Information Sciences, Science China Technological Sciences, etc. 

He was listed by Clarivate Analytics/Thomson Reuters Highly Cited Researchers in Engineering in 2014-2023. He publishes about 100 IEEE Transactions journal papers with more than 20,000 citations. Moreover, Dr. Yu is also the recipient of the Second Prize of State Natural Science Award of China in 2016. He is also the Cheung Kong Scholars Programmer of Ministry of Education of China (Artificial Intelligence).
\end{IEEEbiography}

\end{document}